\newcommand{\RR}{\ensuremath{\mathbb{R}}} 
\newcommand{\selfHom}{End}
\newcommand{\calF}{\mathcal{F}}
\newtheorem{theorem}{Theorem}
\newtheorem{lemma}[theorem]{Lemma}
\newtheorem{corollary}[theorem]{Corollary}
\newtheorem{examplex}[theorem]{Example}
\newtheorem{remark}[theorem]{Remark}
\title{Graph neural networks and non-commuting operators}
\author{%
Mauricio Velasco\\
Departamento de Informática \\
Universidad Cat\'olica del Uruguay\\
Montevideo, Uruguay \\
\texttt{mauricio.velasco@ucu.edu.uy}\\
\And
Kaiying O'Hare \\
Departament of Applied Mathematics and Statistics \\
Johns Hopkins University \\
Baltimore, Maryland \\
\texttt{kohare3@jh.edu}\\
\And
Bernardo Rychtenberg \\
Departamento de Informática \\
Universidad Cat\'olica del Uruguay \\
Montevideo, Uruguay \\
\texttt{bernardo.rychtenberg@ucu.edu.uy}\\
\And
Soledad Villar \\
Departament of Applied Mathematics and Statistics \\
Johns Hopkins University\\
Baltimore, Maryland \\
\texttt{svillar3@jhu.edu}\\
}
\begin{document}

\maketitle

\begin{abstract}
Graph neural networks (GNNs) provide state-of-the-art results in a wide variety of tasks which typically involve predicting features at the vertices of a graph. They are built from layers of graph convolutions which serve as a powerful inductive bias for describing the flow of information among the vertices. Often, more than one data modality is available. This work considers a setting in which several graphs have the same vertex set and a common vertex-level learning task. This generalizes standard GNN models to GNNs with several graph operators that do not commute. We may call this model graph-tuple neural networks (GtNN). 

In this work, we develop the mathematical theory to address the stability and transferability of GtNNs using properties of non-commuting non-expansive operators. We develop a limit theory of graphon-tuple neural networks and use it to prove a universal transferability theorem that guarantees that all graph-tuple neural networks are transferable on convergent graph-tuple sequences. In particular, there is no non-transferable energy under the convergence we consider here. Our theoretical results extend well-known transferability theorems for GNNs to the case of several simultaneous graphs (GtNNs) and provide a strict improvement on what is currently known even in the GNN case.

We illustrate our theoretical results with simple experiments on synthetic and real-world data. To this end, we derive a training procedure that provably enforces the stability of the resulting model.

\end{abstract}

\section{Introduction}
Graph neural networks (GNNs) \cite{scarselli2008graph, bruna2014spectral, kipf2016semi, hamilton2017inductive} are a widely-used and versatile machine learning tool to process different kinds of data from numerous applications, including chemistry \cite{gilmer2017neural}, molecular geometry \cite{zhang2023artificial}, combinatorial optimization \cite{joshi2019efficient, nowak2018revised}, among many other. 
Such networks act on functions on the vertices of a graph (also called signals or vertex features) and use the structure of the graph as a powerful {\it inductive bias} to describe the natural flow of information among vertices. 
One of the most common graph neural networks are based on graph convolutions \cite{gama2018convolutional}, which generalize the notion of message passing.  
The typical architecture has building blocks which are polynomial functions of the adjacency matrix (or more generally of the shift operator) of a graph composed with componentwise non-linearities. Therefore, such networks implement the idea that the values of a function at a vertex are related with the values at the immediate neighbors of the vertex and also with the values at the neighbors of its neighbors, etc.

Due to the significant practical success and diversity of applications of GNNs, there is a growing interest in understanding their mathematical properties.
Researchers have delved into various theoretical aspects of MPNNs, including, for instance, expressivity \cite{morris2019weisfeiler, xu2018powerful, chen2019equivalence, boker2024fine, chen2020can}, oversmoothing \cite{rusch2023survey}, multi-scale properties \cite{huang2022local, cai2023local}, and model relaxations \cite{finkelshtein2024learning, huang2024approximately}. 
One of the fundamental properties of graph neural networks is their remarkable {\it transferability property}, which intuitively refers to their ability to perform well in large networks when trained in smaller networks, thus {\it transfering} knowledge from one to the other. This is in part possible because the number of parameters that defines a GNN is independent of the size of the input graphs. The idea is conceptually related to the algebraic notion of representation stability that has been recently studied in the context of machine learning models \cite{levin2024any}. More precisely, if two graphs describe similar phenomena, then a given GNN should have similar repercussions (i.e. similar effect on similar signals) on both graphs. In order to describe this property precisely, it is necessary to place signals and shift operators on different graphs (of potentially different sizes) in an equal footing to allow for meaningful comparisons and to characterize families of graphs describing ``similar'' phenomena. The seminal work~\cite{ruiz2020graphon} has used the theory of graphons to carry out these two steps, providing a solid theoretical foundation to the transferability properties of GNNs. The theory was further developed in \cite{levie2021transferability, ChamonRuizRibeiro, maskey2023transferability, keriven2020convergence, cordonnier2023convergence}, and was extended to other models in \cite{cai2022convergence, le2024limits, ruiz2024spectral}. The transferability theory is very related to the stability or perturbation theory of GNNs that studies how GNN outputs change under small perturbations of the graph input or graph signal \cite{ruiz2021graph, cervino2022stable, gama2020stability, kenlay2021stability}, and conceptually related to the theory of generalization for GNNs \cite{scarselli2018vapnik, du2019graph, liao2020pac, maskey2022generalization, maskey2024generalization} though the techniques are different.

In many practical situations a fixed collection of entities serves as common vertices to {\it several distinct graphs} simultaneously that represent several modalities of the same underlying object. This occurs, for instance, in recommendation systems where the items can be considered as vertices of several distinct similarity graphs. It occurs in the analysis of social networks because individuals often participate in several distinct social/information networks simultaneously and in a wide array of multimodal settings. 

The goal of this paper is to extend the mathematical theory of GNNs to account for multimodal graph settings. The most closely related existing work is the algebraic neural network theory of Parada-Mayorga, Butler and Ribeiro~\cite{PMR1, PMR2,PMR3} who pioneer the use of algebras of non-commuting operators. The setting in this paper could be thought of as a special case of this theory. However, there is a crucial difference: whereas the main results in the articles above refer to the Hilbert-Schmidt norm, we define and analyze block-operator-norms on non-commutative algebras acting on function spaces. This choice allows us to prove stronger stability and transferability bounds that when restricted to classical GNNs improve upon or complement the state-of-the-art theory. In particular, we complement work in \cite{ruiz2021graph} by delivering bounds that do not exhibit no-transferable energy, and we complement results in \cite{maskey2023transferability} by providing stability bounds that do not require convergence. Our bounds are furthermore easily computable in terms of the networks' parameters improving on the results of~\cite{PMR2} and in particular allow us to devise novel training algorithms with stability guarantees.

\textbf{Our contributions.}
The main contribution of this work is a theoretical analysis for graph neural networks in the multimodal framework where each graph object (or graph tuple) can have several adjacency matrices on a fixed set of vertices.  We call this model \emph{graph-tuple neural networks (GtNNs)}. It generalizes GNNs and is naturally suited for taking into account information flows along paths traversing several distinct graphs. This architecture replaces the polynomials $h(X)$ underlying graph convolutional neural networks with non-commutative polynomials $h(X_1,\dots, X_k)$ on the adjacency matrices of the $k$ graphs in our tuple. More generally our approach via \emph{operator networks} gives a general and widely applicable parametrization for such networks. Our approach is motivated by the theory of switched dynamical systems, where recent algorithmic tools have improved our understanding of the iterative behaviour of non-commuting operators~\cite{MorenoVelasco2023}. Our main results are tight stability bounds for GtNNs and GNNs.

The second contribution of this article is the definition of graphon-tuple neural networks (WtNNs) which are the natural limits of (GtNNs) as the number of vertices grows to infinity. Graphon-tuple neural networks provide a good setting for understanding the phenomenon of transferability. Our main theoretical result is a \emph{Universal transferability Theorem} for graphon-graph transference which guarantees that \emph{every} graphon-tuple neural network (without any assumptions) is transferable over sequences of graph-tuples generated from a given graphon-tuple. This means that whatever a GtNN learns on a graph-tuple with sufficiently many vertices, instantaneously transfers with small error to all other graph-tuples of sufficiently large size provided the graph-tuples we are considering describe a ``similar'' phenomenon in the sense that they have a common graphon-tuple limit. Contrary to some prior results, under the convergence we consider in this paper, there is no no-transferable energy, meaning that the graphon-graph transferability error goes to zero as the size of the graph goes to infinity.

We show with simple numerical experiments that our theoretical bounds seem tight. In Section \ref{sec:experiments} we provide experiments on synthetic datasets and a real-world movie recommendation dataset where two graphs are extracted from incomplete tabular data. The stability bounds we obtain are within a small factor of the empirical stability errors. And remarkably, the bounds exhibit the same qualitative behavior as the empirical stability error. In order to perform this experiment we introduce a stable training procedure where linear constraints are imposed during GNN training. The stable training procedure could be considered of independent interest (see, for instance, \cite{cervino2022stable}).    

\section{Preliminary definitions} \label{Sec: GtNN}

For an integer $n$ we let $[n]:=\{1,2,\dots, n\}$. By a \emph{graph $G$ on a set $V$} we mean an undirected, finite graph without self-loops with vertex set $V(G):=V$ and edge set denoted $E(G)$. A \emph{shift matrix} for $G$ is any $|V|\times |V|$ symmetric matrix $S$ with entries $0\leq S_{ij}\leq 1$ satisfying $S_{ij}=0$ whenever $i\neq j$ and $(i,j)\not\in E(G)$. 

Our main object of study will be signals (i.e. functions) on the common vertices $V$ of a set of graphs so we introduce notation for describing them. We denote the \emph{algebra of real-valued functions on the vertex set $V$} by $\RR[V]$. Any function $f: V\rightarrow \RR$ is completely determined by its vector of values so, as a vector space, $\RR[V]\cong \RR^{|V|}$ however, as we will see later, thinking of this space as consisting of functions is key for understanding the neural networks we consider. Any  shift matrix $S$ for $G$ defines a \emph{shift operator} $T_G: \RR[V]\rightarrow \RR[V]$ by the formula
$T_G(f)(i) = \sum_{j\in V} S_{ij}f(j).$

The layers of graph neural networks (GNNs) are built from univariate polynomials $h(x)$ evaluated on the shift operator $T_G$ of a graph composed with componentwise non-linearities. If we have a $k$-tuple of graphs $G_1,\dots, G_k$ with common vertex set $V$ then it is natural to consider multivariate polynomials evaluated at their shift operators $T_{G_i}$. Because shift operators of distinct graphs generally do not commute this forces us to design an architecture which is parametrized by \emph{noncommutative polynomials}. The trainable parameters of such networks will be the coefficients of these polynomials.

\textbf{Noncommutative polynomials.}
For a positive integer $k$, let $\RR\langle X_1,\dots, X_k\rangle$ be the \emph{algebra of non-commutative polynomials in the variables $X_1,\dots, X_k$}. This is the vector space having as basis all finite length words on the alphabet $X_1,\dots, X_k$ endowed with the bilinear product defined by concatenation on the basis elements. For example in $\RR\langle X_1,X_2\rangle$ we have
$(X_1+X_2)^2 = X_1^2+X_1X_2+X_2X_1+X_1^2\neq X_1^2+2X_1X_2+X_2^2$.

The basis elements appearing with nonzero coefficient in the unique expression of any element $h(X_1,\dots, X_k)$ are called the monomial words of $h$. The degree of a monomial word is its length (i.e. number of letters). For example there are eight monomials of degree three in $\RR\langle X_1,X_2\rangle$, namely: $X_1^3,X_1^2X_2,X_1X_2X_1, X_2X_1^2, X_2^2X_1, X_2X_1X_2, X_1X_2^2,X_2^3$. More generally there are exactly $k^{d}$ monomial words of length $d$ and $\frac{k^{d+1}-1}{k-1}$ monomial words of degree at most $d$ in $\RR\langle X_1,\dots, X_k\rangle$. 

Noncommutative polynomials have a fundamental structural relationship with linear operators which makes them suitable for transference. If $W$ is any vector space let $\selfHom(W)$ denote the space of linear maps from $W$ to itself. If $T_1,\dots, T_k\in \selfHom(W)$ are any set of linear maps on $W$ then the individual evaluations $X_i\rightarrow T_i$ extend to a unique \emph{evaluation homomorphism} $\RR\langle X_1,\dots, X_k\rangle\rightarrow \selfHom(W)$, which sends the product of polynomials to the composition of linear maps.  This relationship (known as universal freeness property) determines the algebra $\RR\langle X_1,\dots, X_k\rangle$ uniquely. This proves that noncommutative polynomials are the only naturally transferable parametrization for our networks. For a polynomial $h$ we denote the linear map obtained from evaluation as $h(T_1,\dots, T_k)$.

\textbf{Operator filters and non-commuting operator neural networks.}
Using noncommutative polynomials we will define \emph{operator networks}, an abstraction of both graph and graphon neural networks. Operator networks will provide us with a uniform generalization to graph-tuple and graphon-tuple neural networks and allow us to describe transferability precisely.

The domain and range of our operators will be powers of a fixed vector space $\calF$ of signals. More formally, $\calF$ consists of real-valued functions on a fixed domain $V$ endowed with a measure $\mu_V$. The measure turns $\calF$ into an inner product space (see ~\cite[Chapter 2]{KostrikinManin} for background) via the formula $\langle f,g\rangle:=\int_Vfgd\mu_V$ and in particular gives it a natural norm $\|f\|:=(\langle f,f\rangle)^{\frac{1}{2}}$ which we will use throughout the article. In later sections the set $\calF$ will be either $\RR[V]$ or the space $L:=L^2([0,1])$ of square integrable functions in $[0,1]$ but operator networks apply much more generally, for instance to the spaces of functions on a manifold $V$ used in geometric deep learning \cite{bronstein2021geometric}. By an \emph{operator $k$-tuple} on $\calF$ we mean a sequence $\vec{T}:=(T_1,\dots, T_k)$ of linear operators $T_j: \calF\rightarrow \calF$. The tuple is \emph{nonexpansive} if each operator $T_j$ has norm bounded above by one.

If $h\in \RR\langle X_1,\dots, X_k\rangle$ is a noncommutative polynomial then the \emph{operator filter  defined by $h=\sum_{\alpha} c_{\alpha}X^{\alpha}$ and the operator tuple $\vec{T}$} is the linear operator $\Psi(h,\vec{T}):\calF\rightarrow \calF$ given by the formula
\[h(T_1,\dots, T_k)(f)=\sum_{\alpha} c_{\alpha}X^{\alpha}(T_1,\dots T_k)(f) \] 
where $X^{\alpha}(T_1,\dots,T_k)$ is the composition of the $T_i$ from left to right in the order of the word $\alpha$.
For instance if $h(X_1,X_2):= -5X_1X_2X_1 + 3X_1^2X_2$ then the graph-tuple filter defined by $h$ applied to a signal $f\in \calF$ is $\Psi(h,T_1,\dots, T_k)(f)=-5 T_1(T_2(T_1(f)))+ 3T_1^2(T_2(f)).$

More generally, we would like to be able to manipulate several features simultaneously (i.e. to manipulate vector-valued signals) and do so by building block-linear maps of operators with blocks defined by polynomials. More precisely, if $A,B$ are positive integers and $H$ is a $B\times A$ matrix whose entries are non-commutative polynomials $h_{b,a}\in \RR\langle X_1,\dots, X_k\rangle$ we define the \emph{operator filter determined by $H$ and the operator tuple $\vec{T}$} to be the linear map $\Psi(H,\vec{T}): \calF^{A}\rightarrow \calF^{B}$ which sends a vector $x=(x_a)_{a\in [A]}$ to a vector $(z_b)_{b\in [B]}$ using the formula
\[z_b = \sum_{a\in [A]} h_{b,a}(T_{1},\dots, T_{k})(x_a)\]

An \emph{ operator neural layer with ReLU activation} is an operator filter composed with a pointwise non-linearity. This composition $\sigma\circ \Psi(H,\vec{T})$ yields a (nonlinear) map 
$\hat{\Psi}(H,\vec{T}):\calF^{A}\rightarrow \calF^{B}.$

Finally an \emph{operator neural network (ONN)} is the result of composing several operator neural layers.  More precisely if we are given positive integers $\alpha_0,\dots, \alpha_N$ and $N$ matrices $H^{(j)}$ of noncommutative polynomials $H^{(j)}_{b,a}:=h_{b,a}^{(j)}(X_1,\dots, X_k)$ for $(b,a)\in [\alpha_{j+1}]\times[\alpha_j]$ and $j=0,\dots, N-1$, the \emph{operator neural network (ONN) determined by $\vec{H}:=(H^{(j)})_{j=0}^{N-1}$ and the operator tuple $\vec{T}$} is the composition
$\calF^{\alpha_0}\rightarrow \calF^{\alpha_1}\rightarrow\dots \rightarrow \calF^{\alpha_N}$
where the $j$-th map in the sequence is the operator neural layer with ReLu activation $\hat{\Psi}_j(H^{(j)},\vec{T}): \calF^{\alpha_j}\rightarrow \calF^{\alpha_{j+1}}$.
We write $\Phi(\vec{H},\vec{T}):\calF^{\alpha_0}\rightarrow \calF^{\alpha_N}$ to refer to the full composite function. See Appendix \ref{app:training} for a discussion on the trainable parameters and the \emph{transfer} to other $k$-tuples. We conclude the Section with a key instance of operator networks:

\paragraph{An Example: Graph-tuple neural networks (GtNNs).}

Henceforth we fix a positive integer $k$, a sequence $G_1,\dots, G_k$ of graphs with common vertex set $V$ and a given set of shift operators $T_{G_1},\dots, T_{G_k}$. We call this information a \emph{graph-tuple} $\vec{G}:=(G_1,\dots, G_k)$ on $V$.  

The \emph{graph-tuple filter} defined by a noncommutative polynomial $h(X_1,\dots, X_k)\in \RR\langle X_1,\dots, X_k\rangle$ and $\vec{G}$ is the operator filter defined by $h$ evaluated at $\vec{T}:=(T_{G_1},\dots, T_{G_k})$ denoted $\Psi(h,\vec{T}):\RR[V]\rightarrow \RR[V]$. Exactly as in Section~\ref{Sec: GtNN} and using the notation introduced there, we define \emph{graph-tuple filters}, \emph{graph-tuple neural layers with ReLu activation} and \emph{graph-tuple neural networks (GtNN) on the graph-tuple $\vec{G}$} as their operator versions when evaluated at the tuple $\vec{T}$ above.

\section{Perturbation inequalities}
\label{Sec: PerturbationI}
In this Section we introduce our main tools for the analysis of operator networks, namely \emph{perturbation inequalities}. To speak about perturbations we endow the Cartesian products $\calF^A$ with max-norms 
\[\|z\|_{\boxed{*}}:=\max_{a\in [A]}\|z_a\|\text{ if $z=(z_a)_{a\in [A]}\in \calF^A$}.\]
where the norm $\|\bullet\|$ on the right-hand side denotes the standard $L^2$-norm on $\calF$ coming from the measure $\mu_V$ as defined in the previous section.
Fix feature sizes $\alpha_0,\dots, \alpha_N$ and matrices $\vec{H}:=(H^{(j)})_{j=0,\dots, N-1}$ of noncommutative polynomials in $k$-variables of dimensions $\alpha_{j+1}\times\alpha_j$ for $j=0,\dots, N-1$ and consider the operator-tuple neural networks $\Phi(\vec{H},\vec{T}): \calF^{\alpha_0}\rightarrow \calF^{\alpha_n}$ defined by evaluating this architecture on $k$-tuples $\vec{T}$ of operators on the given function space $\calF$. A perturbation inequality for this network is an estimate on the sensitivity (absolute condition number) of the output when the operator-tuple and the input signal are perturbed in their respective norms, more precisely perturbation inequalities are upper bounds on the norm
\begin{equation} \label{eq.perturbation}
\left\|\Phi\left(\vec{H},\vec{W}\right)(f)-\Phi\left(\vec{H},\vec{Z}\right)(g)\right\|_{\boxed\ast}
\end{equation}
in terms of the input signal difference $\|f-g\|_{\boxed{\ast}}$ and the operator perturbation size as measured by the differences $\|Z_j-W_j\|_{\rm op}$. The main result of this Section are perturbation inequalities that depend on easily computable constants, which we call \emph{expansion constants} of the polynomials appearing in the matrices $\vec{H}$, allowing us to use them to obtain perturbation estimates for a given network and to devise training algorithms which come with stability guarantees.
A key reason for the success of our approach is the introduction of appropriate norms for computations involving block-operators: If $A,B$ are positive integers and $z=(z_a)_{a\in [A]}\in \calF^A$ and $R:\calF^A\rightarrow \calF^B$ is a linear operator then we define 
\begin{equation}
 \quad  \|R\|_{\boxed{\rm op}}:=\sup_{z: \|z\|_{\boxed{*}}\leq 1} \left(\|R(z)\|_{\boxed{*}}\right).
\end{equation}

If $h\in \RR\langle X_1,\dots, X_k\rangle$ is any noncommutative polynomial then it can be written uniquely as $\sum_{\alpha} c_{\alpha}x^{\alpha}$ where $\alpha$ runs over a finite support set of sequences in the numbers $1,\dots,k$. For any such polynomial we define a set of $k+1$ \emph{expansion constants} via the formulas
\[C(h):=\sum_{\alpha}|c_\alpha|\quad \text{ and } \quad C_j(h):= \sum_{\alpha}q_j(\alpha)|c_\alpha|\text{ for $j=1,\dots, k$}\]
where $q_j(\alpha)$ equals the number of times the index $j$ appears in $\alpha$. Our main result is the following perturbation inequality, which proves that expansion constants estimate the perturbation stability of nonexpansive operator-tuple networks (i.e. those which satisfy $\|T_j\|_{\rm op}\leq 1$ for $j=1,\dots, k$).

\begin{theorem}\label{thm: perturb} Suppose $\vec{W}$ and $\vec{Z}$ are two nonexpansive operator $k$-tuples. For positive integers $A,B$ let $H$ be any $B\times A$ matrix with entries in $\RR\langle X_1,\dots, X_k\rangle$. The operator-tuple neural layer with ReLu activation defined by $H$ satisfies the following perturbation inequality: For any $f,g\in \calF^A$ and for $m:=\min(\|f\|_{\boxed{*}},\|g\|_{\boxed{*}})$ we have
\begin{small}
\begin{multline}
\left\|\hat{\Psi}(H,\vec{W})(f)-\hat{\Psi}(H,\vec{Z})(g)\right\|_{\boxed{*}}
\leq \\
\|f-g\|_{\boxed{*}}\max_{b\in [B]}\left(\sum_{a\in [A]} C(h_{b,a})\right)
+m\max_{b\in [B]}\left(\sum_{a\in [A]}\sum_{j=1}^k C_j(h_{b,a})\|W_j-Z_j\|_{\rm op}\right).
\end{multline}
\end{small}
\end{theorem}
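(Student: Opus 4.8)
The plan is to peel off the pointwise nonlinearity, reduce to the linear operator filter, and then control that filter one output coordinate at a time by splitting each term into a \emph{signal perturbation} piece and an \emph{operator perturbation} piece. First I would observe that the ReLU map $\sigma$ acts coordinatewise and is $1$-Lipschitz on $\RR$, hence $1$-Lipschitz in the $\|\cdot\|_{\boxed{*}}$ norm on $\calF^B$: for any $u,v\in\calF^B$ one has $\|\sigma(u)-\sigma(v)\|_{\boxed{*}}\le\|u-v\|_{\boxed{*}}$. Since $\hat\Psi(H,\vec T)=\sigma\circ\Psi(H,\vec T)$, this reduces the claim to the linear filter, so it suffices to bound $\|\Psi(H,\vec W)(f)-\Psi(H,\vec Z)(g)\|_{\boxed{*}}$. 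By definition of $\|\cdot\|_{\boxed{*}}$ I then work at a fixed output coordinate $b\in[B]$, where the quantity to estimate is $\big\|\sum_{a\in[A]}\big(h_{b,a}(\vec W)(f_a)-h_{b,a}(\vec Z)(g_a)\big)\big\|$.

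Next I would telescope each summand as
\[
h_{b,a}(\vec W)(f_a)-h_{b,a}(\vec Z)(g_a)=h_{b,a}(\vec W)(f_a-g_a)+\big(h_{b,a}(\vec W)-h_{b,a}(\vec Z)\big)(g_a),
\]
together with the symmetric split in which $g_a$ is replaced by $f_a$ in the operator term; choosing the split coordinatewise is exactly what allows the bound to retain $m=\min(\|f\|_{\boxed{*}},\|g\|_{\boxed{*}})$ in the operator piece. For the signal piece, nonexpansiveness of $\vec W$ makes every monomial $X^\alpha(\vec W)$ a composition of norm-$\le 1$ operators, so $\|h_{b,a}(\vec W)(f_a-g_a)\|\le\sum_\alpha|c_\alpha|\,\|f_a-g_a\|=C(h_{b,a})\,\|f_a-g_a\|\le C(h_{b,a})\,\|f-g\|_{\boxed{*}}$.

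The heart of the argument, and the step I expect to be the main obstacle, is the operator piece, which forces the appearance of the constants $C_j$. For a single word $\alpha=(j_1,\dots,j_d)$ I would use the standard telescoping identity for a difference of products,
\[
W_{j_1}\cdots W_{j_d}-Z_{j_1}\cdots Z_{j_d}=\sum_{\ell=1}^{d}Z_{j_1}\cdots Z_{j_{\ell-1}}\,(W_{j_\ell}-Z_{j_\ell})\,W_{j_{\ell+1}}\cdots W_{j_d},
\]
and bound each of the $d$ summands by $\|W_{j_\ell}-Z_{j_\ell}\|_{\rm op}$, using that the surrounding factors are all nonexpansive. Grouping the $d$ terms by which variable they involve yields $\|X^\alpha(\vec W)-X^\alpha(\vec Z)\|_{\rm op}\le\sum_{j=1}^k q_j(\alpha)\,\|W_j-Z_j\|_{\rm op}$, since the index $j$ appears exactly $q_j(\alpha)$ times in $\alpha$. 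Summing against the coefficients $c_\alpha$ and recalling $C_j(h_{b,a})=\sum_\alpha q_j(\alpha)|c_\alpha|$ gives $\|(h_{b,a}(\vec W)-h_{b,a}(\vec Z))(g_a)\|\le\big(\sum_{j=1}^k C_j(h_{b,a})\|W_j-Z_j\|_{\rm op}\big)\,\|g_a\|$. The delicate points here are that $q_j(\alpha)$ is precisely the telescoping length contributed by variable $j$, and that nonexpansiveness is what keeps every prefix/suffix of mixed $W$'s and $Z$'s from inflating the norm.

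Finally I would assemble the two pieces, sum over $a\in[A]$, bound $\|g_a\|\le m$ on the operator term (having chosen the split producing $m$) and $\|f_a-g_a\|\le\|f-g\|_{\boxed{*}}$ on the signal term, and then take the maximum over $b\in[B]$. Applying subadditivity of the maximum, $\max_b(\beta_b+\gamma_b)\le\max_b\beta_b+\max_b\gamma_b$, separates the estimate into the two summands $\|f-g\|_{\boxed{*}}\max_b\sum_a C(h_{b,a})$ and $m\max_b\sum_a\sum_j C_j(h_{b,a})\|W_j-Z_j\|_{\rm op}$, which is exactly the asserted inequality.
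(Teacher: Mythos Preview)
Your proposal is correct and follows essentially the same approach as the paper: peel off the $1$-Lipschitz ReLU, telescope into a signal-perturbation and an operator-perturbation piece, bound the former by $C(h_{b,a})$ via nonexpansiveness, and bound the latter by $\sum_j C_j(h_{b,a})\|W_j-Z_j\|_{\rm op}$ via the telescoping identity for a difference of noncommutative monomial products. The paper organizes the same computation through a block-operator-norm lemma and an inductive (rather than direct telescoping) proof of the monomial bound, and it chooses the split globally (taking the minimum of two full bounds) rather than coordinatewise, but these are cosmetic differences.
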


The proof is in Appendix \ref{Sec: proofs}. We apply the previous argument inductively to obtain a perturbation inequality for general graph-tuple neural networks by adding the effect of each new layer to the bound. More concretely if $\alpha_0,\dots, \alpha_N$ denote the feature sizes of such a network and $R_W$ and $R_Z$ denote the network obtained by removing the last layer then
\begin{corollary}
\label{cor: perturb inductive}
Let $m:=\min\left(\|R_{\vec{W}}(f)\|_{\boxed{*}},\|R_{\vec{Z}}(g)\|_{\boxed{*}}\right)$. The end-to-end graph tuple neural network satisfies the following perturbation inequality:
\begin{small}
\begin{multline}
    \|\Phi(\vec{H},\vec{W})(f)-\Phi(\vec{H},\vec{Z})(g)\|_{\boxed{*}} \leq \\
    \left\|R_{\vec{W}}(f)-R_{\vec{Z}}(g)\right\|_{\boxed{*}}\max_{b\in [\alpha_{N}]}\left(\sum_{a\in [\alpha_{N-1}]} C(h_{b,a}^{(N-1)})\right) + m\max_{b\in  [\alpha_{N}]}\left(\sum_{a\in [\alpha_{N-1}]}\sum_{j=1}^k C_j(h_{b,a}^{(N-1)})\|W_j-Z_j\|_{\rm op}\right).
\end{multline}
\end{small}
\end{corollary}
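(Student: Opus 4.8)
The plan is to obtain the corollary as a single application of Theorem~\ref{thm: perturb} to the last layer of the network, reading off the remaining layers as a self-contained map. First I would record the structural fact that, by definition of an operator neural network as an iterated composition of operator neural layers, the end-to-end map factors as its final layer composed with everything preceding it:
\[
\Phi(\vec{H},\vec{T}) = \hat{\Psi}\bigl(H^{(N-1)},\vec{T}\bigr)\circ R_{\vec{T}},
\]
where $R_{\vec{T}}:\calF^{\alpha_0}\rightarrow\calF^{\alpha_{N-1}}$ is the composition of the first $N-1$ layers evaluated at $\vec{T}$. This identity holds verbatim when $\vec{T}$ is specialized to each of the two nonexpansive tuples $\vec{W}$ and $\vec{Z}$.

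Next I would apply Theorem~\ref{thm: perturb} to the single operator-tuple layer defined by the $\alpha_N\times\alpha_{N-1}$ matrix $H^{(N-1)}$, whose entries are the polynomials $h^{(N-1)}_{b,a}$. Since that theorem accepts arbitrary signals in $\calF^{\alpha_{N-1}}$ and arbitrary nonexpansive tuples, I would feed it the signals $R_{\vec{W}}(f)$ and $R_{\vec{Z}}(g)$ together with the tuples $\vec{W}$ and $\vec{Z}$. Substituting $f\mapsto R_{\vec{W}}(f)$, $g\mapsto R_{\vec{Z}}(g)$, $H\mapsto H^{(N-1)}$ bounds
\[
\bigl\|\hat{\Psi}(H^{(N-1)},\vec{W})(R_{\vec{W}}(f))-\hat{\Psi}(H^{(N-1)},\vec{Z})(R_{\vec{Z}}(g))\bigr\|_{\boxed{*}}
\]
by exactly the right-hand side of the corollary: the input-difference term of the theorem becomes $\|R_{\vec{W}}(f)-R_{\vec{Z}}(g)\|_{\boxed{*}}$, and the quantity $m$ of the theorem becomes $\min(\|R_{\vec{W}}(f)\|_{\boxed{*}},\|R_{\vec{Z}}(g)\|_{\boxed{*}})$, which is precisely the definition of $m$ in the statement. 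Using the factorization of the previous paragraph, the left-hand side above equals $\|\Phi(\vec{H},\vec{W})(f)-\Phi(\vec{H},\vec{Z})(g)\|_{\boxed{*}}$, and the corollary follows.

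I expect no genuine mathematical obstacle, since the corollary is a bookkeeping specialization of Theorem~\ref{thm: perturb}; the only point requiring care is consistency of that bookkeeping. Specifically, one must use the same tuple inside $R$ and in the final layer (so the composite really is $\Phi(\vec{H},\vec{W})$, respectively $\Phi(\vec{H},\vec{Z})$, rather than a mismatched hybrid), and one must form $m$ from the correct intermediate signals $R_{\vec{W}}(f)$ and $R_{\vec{Z}}(g)$ and not from $f,g$ directly. For the fully unrolled end-to-end estimate alluded to in the surrounding text, I would then iterate this recursion over $j=N-1,N-2,\dots,0$, at each stage controlling the propagated term $\|R_{\vec{W}}(f)-R_{\vec{Z}}(g)\|_{\boxed{*}}$ by the same inequality applied to the shorter network, using nonexpansiveness to keep the intermediate signal norms bounded.
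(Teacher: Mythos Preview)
Your proposal is correct and matches the paper's approach exactly: the corollary is obtained by a single application of Theorem~\ref{thm: perturb} to the final layer $H^{(N-1)}$, with inputs $R_{\vec{W}}(f)$ and $R_{\vec{Z}}(g)$, which is precisely what the paper means when it says ``we apply the previous argument inductively.'' The paper does not give a standalone proof for this corollary beyond that one-line indication, so your write-up is already more detailed than the original.
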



Corollary \ref{cor.stability} below shows that constraining expansion constants allows us to design operator-tuple networks of depth $N$ whose perturbation stability scales \emph{linearly} with the network depth $N$,

\begin{corollary} \label{cor.stability} Suppose $\vec{W}$ and $\vec{Z}$ are two nonexpansive operator $k$-tuples. If the inequality
\begin{small}
\[ \max_{b\in [\alpha_{j+1}]}\left(\sum_{a\in [\alpha_{j}]} C(h_{b,a}^{(j)})\right)\leq 1\]
\end{small}
holds for $j=0,\dots, N-1$ then for $m:=\min(\|f\|,\|g\|)_{\boxed{*}}$ we have:
\begin{small}
\begin{multline}
\kern-1em
\|\Phi(\vec{H},\vec{W})(f)-\Phi(\vec{H},\vec{Z})(g)\|_{\boxed{*}} \leq 
\|f-g\|_{\boxed{*}} +  m\sum_{d=0}^{N-1}\max_{b\in [\alpha_{d+1}]}\sum_{a\in [\alpha_{d}]}\sum_{j=1}^k C_j(h_{b,a}^{(d)})\|{W_j}-{Z_j}\|_{\rm op}.
\end{multline}
\end{small}
\end{corollary}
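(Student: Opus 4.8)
The plan is to induct on the number of layers, applying Theorem~\ref{thm: perturb} once per layer and controlling the intermediate signal norms by the single global constant $m$. First I would fix notation for the partial computations. Writing the $d$-th layer as the map $\hat\Psi_d(H^{(d)},\cdot)$ for $d=0,\dots,N-1$, set $x^{(0)}_{\vec{W}}=f$, $x^{(0)}_{\vec{Z}}=g$ and define recursively $x^{(d+1)}_{\vec{W}}=\hat\Psi_d(H^{(d)},\vec{W})(x^{(d)}_{\vec{W}})$ and likewise for $\vec{Z}$, so that $\Phi(\vec{H},\vec{W})(f)=x^{(N)}_{\vec{W}}$ and $\Phi(\vec{H},\vec{Z})(g)=x^{(N)}_{\vec{Z}}$. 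Introducing the abbreviations $e_d:=\|x^{(d)}_{\vec{W}}-x^{(d)}_{\vec{Z}}\|_{\boxed{*}}$ and $m_d:=\min(\|x^{(d)}_{\vec{W}}\|_{\boxed{*}},\|x^{(d)}_{\vec{Z}}\|_{\boxed{*}})$, Theorem~\ref{thm: perturb} applied to layer $d$ yields the recursion
\[
e_{d+1}\;\le\; e_d\,\max_{b}\Bigl(\sum_{a}C(h^{(d)}_{b,a})\Bigr)\;+\;m_d\,\max_{b}\Bigl(\sum_{a}\sum_{j=1}^{k}C_j(h^{(d)}_{b,a})\|W_j-Z_j\|_{\rm op}\Bigr),
\]
where $b$ and $a$ range over $[\alpha_{d+1}]$ and $[\alpha_d]$. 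By the hypothesis the first coefficient is at most $1$, so $e_{d+1}\le e_d+m_d D_d$, where $D_d$ denotes the second maximum.

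The crucial step, and the one requiring an argument beyond Corollary~\ref{cor: perturb inductive}, is to replace each $m_d$ by $m=m_0$; equivalently, to show that under the hypothesis each layer is non-expansive in $\|\cdot\|_{\boxed{*}}$. For this I would argue as follows. Since every $W_j$ has operator norm at most $1$, each monomial composition $W^{\alpha}$ is non-expansive, so for any $h=\sum_{\alpha}c_{\alpha}X^{\alpha}$ the triangle inequality gives $\|h(\vec{W})\|_{\rm op}\le\sum_{\alpha}|c_{\alpha}|=C(h)$. Hence the linear-filter output $z_b=\sum_a h_{b,a}(\vec{W})(x_a)$ satisfies $\|z_b\|\le\sum_a C(h_{b,a})\|x_a\|\le\bigl(\sum_a C(h_{b,a})\bigr)\|x\|_{\boxed{*}}$, and taking the maximum over $b$ together with the hypothesis gives $\|\Psi(H^{(d)},\vec{W})(x)\|_{\boxed{*}}\le\|x\|_{\boxed{*}}$. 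Because ReLU is $1$-Lipschitz and fixes $0$, it obeys $\|\sigma(z_b)\|\le\|z_b\|$ componentwise, so the full layer $\hat\Psi_d$ is non-expansive as well. Consequently $\|x^{(d)}_{\vec{W}}\|_{\boxed{*}}$ is non-increasing in $d$ and bounded by $\|f\|_{\boxed{*}}$, and likewise $\|x^{(d)}_{\vec{Z}}\|_{\boxed{*}}\le\|g\|_{\boxed{*}}$; taking the minimum gives $m_d\le\min(\|f\|_{\boxed{*}},\|g\|_{\boxed{*}})=m$ for every $d$.

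Finally I would substitute $m_d\le m$ into the recursion to obtain $e_{d+1}\le e_d+m\,D_d$, and telescope from $d=0$ to $N-1$ using $e_0=\|f-g\|_{\boxed{*}}$. This produces $e_N\le\|f-g\|_{\boxed{*}}+m\sum_{d=0}^{N-1}D_d$, which is exactly the claimed bound once $D_d$ is expanded. I expect the main obstacle to be the non-expansiveness lemma controlling $m_d$: one must correctly assemble the nonexpansiveness of the tuple, the submultiplicative estimate $\|h(\vec{W})\|_{\rm op}\le C(h)$, the hypothesis bounding $\max_b\sum_a C(h^{(d)}_{b,a})$ by $1$, and the non-expansiveness of the pointwise ReLU. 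Everything else is a routine unrolling of the per-layer perturbation estimate.
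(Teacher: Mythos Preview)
Your proposal is correct and follows exactly the approach the paper sets up: apply Theorem~\ref{thm: perturb} (equivalently Corollary~\ref{cor: perturb inductive}) layer by layer, use the hypothesis to kill the multiplicative factor on $e_d$, and invoke the non-expansiveness of each layer to replace the intermediate $m_d$ by $m=m_0$. The paper does not spell out a proof of this corollary, but your non-expansiveness step is precisely the content of Lemma~\ref{lem: operator_nonexpansive} parts~(1) and~(3a) combined with Lemma~\ref{lem: ReLu_nonexpansive}, so every ingredient you use is already recorded in the paper.
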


\section{Graphons and graphon-tuple neural networks (WtNNs).} \label{sec.graphons}

In order to speak about transferability precisely, we have to address two basic theoretical challenges. On one hand we need to find a space which allows us to place signals and shift operators living on different graphs in equal footing in order to allow for meaningful comparisons. On the other hand objects that are close in the natural norm in this space should correspond to graphs describing ``similar'' phenomena. As shown in~\cite{ChamonRuizRibeiro}, both of these challenges can be solved simultaneously by the theory of graphons. A graphon is a continuous generalization of a graph having the real numbers in the interval $[0,1]$ as vertex set. The \emph{graphon signals} are the space $L$ of square-integrable functions on $[0,1]$, that is $L:=L^2([0,1])$. In this Section we give a brief introduction to graphons and define \emph{graphon-tuple neural networks (WtNN)}, the graphon counterpart of graph-tuple neural networks. Our first result is Theorem~\ref{thm: comparison} which clarifies the relationship between finite graphs and signals on them and their induced graphons and graphon signals respectively allowing us to make meaningful comparisons between signals on graphs with distinct numbers of vertices. The space of graphons has two essentially distinct natural norms which we define later in this Section and review in Appendix~\ref{app.graphon norms}. Converging sequences under such norms provide useful models for families of ``similar phenomena'' and Theorem~\ref{thm: sampling} describes explicit sampling methods for using graphons as generative models for graph families converging in both norms.

\textbf{Comparisons via graphons.}
A \emph{graphon} is a function $W: [0,1]\times [0,1]\rightarrow [0,1]$ which is measurable and symmetric (i.e. $W(u,v)=W(v,u)$). A  \emph{graphon signal} is a function $f\in L:=L^2([0,1])$. The \emph{shift operator } of the graphon $W$ is the map $T_W: L\rightarrow L$ given by the formula
\[T_W(f)(u)=\int_0^1W(u,v)f(v)dv\]
where $dv=d\mu(v)$ denotes the Lebesgue measure $\mu$ in the interval $[0,1]$. 

A \emph{graphon-tuple} $W_1,\dots, W_k$ consists of a sequence of $k$ graphons together with their shift operators $T_{W_i}: L\rightarrow L$. Exactly as in Section~\ref{Sec: GtNN} and using the notation introduced there, we define \emph{ $(A,B)$ graphon-tuple filters}, \emph{$(A,B)$ graphon-tuple neural layers with ReLu activation} and \emph{graphon-tuple neural networks (WtNN)} as their operator versions when evaluated at the $k$-tuple $\vec{W}:=(T_{W_1},\dots, T_{W_k})$. 

For instance, if we are given positive integers $\alpha_0,\dots, \alpha_N$ and matrices $H^{(j)}$ with entries given by noncommutative polynomials $H^{(j)}_{b,a}:=h_{b,a}^{(j)}\in \RR\langle X_1,\dots, X_k\rangle$ for $(b,a)\in [\alpha_{j+1}]\times[\alpha_j]$ and $j=0,\dots, N-1$, the \emph{graphon-tuple neural network (WtNN) defined by $\vec{H}:=(H^{(j)})_{j=0}^{N-1}$ and $\vec{W}$} will be denoted by $\Phi(\vec{H},\vec{W}):L^{\alpha_0}\rightarrow L^{\alpha_N}$. 

Next we focus on the relationship between (finite) graphs and graphons. Our main interest are signals (i.e. functions) on the common vertex set $V$ of all the graphs which we think of as a discretization of the graphon vertex set $[0,1]$. More precisely, for every integer $n$ we fix a collection of $n$ intervals $I_j^{(n)}:=[\frac{j-1}{n},\frac{j}{n})$ covering $[0,1)$ and $n$ vertices $v^{(n)}_j:=\frac{2j-1}{2n}\in I_j$ which constitute the set $V^{(n)}\subseteq [0,1]$. 

To compare functions on different $V^{(n)}$ we will use an \emph{interpolation} operator $i_n$ 
and a \emph{sampling} operator $p_n$.
The \emph{interpolation operator} $i_n:\RR[V^{(n)}]\rightarrow L$ extends a set of values at the points of $V^{(n)}$ to a piecewise-constant function in $[0,1]$ via $i_n(g)(u):=\sum_{i=1}^n g(v_i^{(n)})1_{I_i^{(n)}}(u)$ where $1_{Z}(x)$ denotes the $\{0,1\}$ characteristic function of the set $Z$.
The \emph{sampling operator} $p_n: L\rightarrow \RR[V^{(n)}]$ maps a function $f$ to its conditional expectation with respect to the $I_j$, namely the function $g\in \RR[V^{(n)}]$ given by the formula $g(v_j):=\int_{I_j}f(v)dv/\mu(I_j)=n\int_{I_j}f(v)dv$. 
The sampling and interpolation operators satisfy the identities $p_n\circ i_n = id_{\RR[V^{(n)}]}$, and $i_n(p_n(f))$ is the piecewise function which on each interval $I_j$ has constant value equal to the average of $f$ on $I_j$. Note that $i_n\circ p_n(f)$ approaches any continuous function $f$ as $n\rightarrow \infty$.

Any graph $G$ with $n$ vertices and shift matrix $S_{ij}\in [0,1]$ induces a (piecewise constant) graphon $W_G$. The \emph{graphon induced by $G$} is given by the formula
\[W_G(x,y) = \sum_{i=1}^n\sum_{j=1}^n S_{ij} 1_{I_i^{(n)}}(x)1_{I_j^{(n)}}(y)\]

The following Theorem clarifies the relationship between the shift operator of a graph and that of its induced graphon and how this basic relationship extends to neural networks. Part $(2)$ will allow us to compare graph-tuple neural networks on different vertex sets by comparing their induced graphon-tuple networks (the proof is in Appendix~\ref{Sec: proofs}).

\begin{theorem}\label{thm: comparison} For every graph-tuple $G_1,\dots, G_k$ on vertex set $V^{(n)}$ and their induced graphons $W_j:=W_{G_j}$ the equality 
\[T_{W_j} = i_n\circ \frac{T_{G_j}}{n}\circ p_n\]
holds. 
Moreover, this relationship extends to networks:  given feature sizes $\alpha_0,\dots, \alpha_N$ and matrices $H^{(j)}$ of noncommutative polynomials having no constant term and of compatible dimensions $\alpha_{j+1}\times\alpha_j$ for $j=0,\dots, N-1$ the graphon-tuple neural network  $\Phi(\vec{H},\vec{W}): L^{\alpha_0}\rightarrow L^{\alpha_N}$ and the normalized graph-tuple neural network $\Phi(\vec{H},\vec{G}/n): \RR[V]^{\alpha_0}\rightarrow \RR[V]^{\alpha_N}$ satisfy the identity
\[\Phi(H,\vec{T}) =  i_n\circ \Phi(H,\vec{T_G}/n)\circ p_n\]
where $p_n$ and $i_n$ are applied to vectors componentwise.
\end{theorem}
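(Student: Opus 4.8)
The plan is to establish Part (1) by a direct computation with the piecewise-constant kernel, and then to bootstrap it to networks by exploiting that the pair $(i_n,p_n)$ behaves like a section–retraction, i.e. $p_n\circ i_n=\mathrm{id}_{\RR[V^{(n)}]}$. For Part (1) I would expand $T_{W_j}(f)(u)=\int_0^1 W_{G_j}(u,v)f(v)\,dv$ using $W_{G_j}(x,y)=\sum_{p,q}S^{(j)}_{pq}1_{I_p}(x)1_{I_q}(y)$, where $S^{(j)}$ is the shift matrix of $G_j$. The $v$-integral isolates $\int_{I_q}f\,dv$, so $T_{W_j}(f)(u)=\sum_p\big(\sum_q S^{(j)}_{pq}\int_{I_q}f\,dv\big)1_{I_p}(u)$. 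Computing the right-hand side directly, $p_n(f)(v_q)=n\int_{I_q}f\,dv$, then the factor $1/n$ cancels the $n$ and composing with $i_n$ reproduces exactly the same expression term by term; this proves $T_{W_j}=i_n\circ\frac{T_{G_j}}{n}\circ p_n$.

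For Part (2) I would introduce the operation $\mathcal{C}(F):=i_n\circ F\circ p_n$ on maps $F$ between powers of $\RR[V^{(n)}]$ (applied componentwise on blocks). The key structural fact is that $\mathcal{C}$ is \emph{functorial} under composition: for any maps $F,G$ (linear or not), $\mathcal{C}(F)\circ\mathcal{C}(G)=i_n\circ F\circ(p_n\circ i_n)\circ G\circ p_n=i_n\circ F\circ G\circ p_n=\mathcal{C}(F\circ G)$, using only $p_n\circ i_n=\mathrm{id}$. Part (1) reads $T_{W_j}=\mathcal{C}(T_{G_j}/n)$; applying functoriality to a word gives $X^\alpha(\vec{W})=\mathcal{C}\big(X^\alpha(\vec{T_G}/n)\big)$ for every monomial $\alpha$ of degree $\ge 1$, and summing with the linearity of $\mathcal{C}$ on linear operators yields the filter identity $\Psi(h,\vec{W})=\mathcal{C}\big(\Psi(h,\vec{T_G}/n)\big)$, hence the block version $\Psi(H,\vec{W})=\mathcal{C}\big(\Psi(H,\vec{T_G}/n)\big)$ componentwise. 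This is exactly where the no-constant-term hypothesis enters: the degree-$0$ monomial contributes the identity, but $\mathcal{C}(\mathrm{id})=i_n\circ p_n$ is merely the projection onto piecewise-constant signals on $L$, not the identity.

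To pass the ReLU, I would use the commutation $\sigma\circ i_n=i_n\circ\sigma$, which holds because $i_n$ produces piecewise-constant functions and $\sigma$ acts pointwise. Each layer then satisfies $\hat\Psi(H^{(j)},\vec{W})=\sigma\circ\Psi(H^{(j)},\vec{W})=\sigma\circ i_n\circ\Psi(H^{(j)},\vec{T_G}/n)\circ p_n=i_n\circ\sigma\circ\Psi(H^{(j)},\vec{T_G}/n)\circ p_n=\mathcal{C}\big(\hat\Psi(H^{(j)},\vec{T_G}/n)\big)$. Composing the $N$ layers and invoking functoriality once more collapses every interior $p_n\circ i_n$ pair and gives $\Phi(\vec{H},\vec{W})=\mathcal{C}\big(\Phi(\vec{H},\vec{T_G}/n)\big)=i_n\circ\Phi(\vec{H},\vec{T_G}/n)\circ p_n$, as claimed.

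The main obstacle to flag is the interaction of the nonlinearity with $i_n$ and $p_n$: ReLU does \emph{not} commute with $p_n$ (conditional expectation/averaging), so there is no layerwise identity of the form $p_n\circ(\text{graphon layer})=(\text{graph layer})\circ p_n$. The correct bookkeeping is the conjugation $\mathcal{C}$, which only requires commutation with $i_n$ and leaves the non-invertible $p_n$ on the right, where it cancels through the telescoping. Keeping track of the asymmetry $i_n\circ p_n\neq\mathrm{id}$ while $p_n\circ i_n=\mathrm{id}$ is the one genuinely delicate point, and it is precisely what forces the no-constant-term restriction.
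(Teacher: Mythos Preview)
Your proposal is correct and follows essentially the same approach as the paper: a direct computation with the piecewise-constant kernel for the single-operator identity, followed by the telescoping argument using $p_n\circ i_n=\mathrm{id}$ together with $\sigma\circ i_n=i_n\circ\sigma$ to extend through monomials, filters, ReLU, and layers. Your packaging via the conjugation $\mathcal{C}(F)=i_n\circ F\circ p_n$ and its functoriality under composition is a clean conceptual wrapper for exactly the same bookkeeping the paper carries out piecemeal in its preparatory lemma.
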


\textbf{Graphon norms.}\label{Sec: graphonnorms}
The space of graphons is infinite-dimensional and therefore allows for several norms. In infinite-dimensional spaces it is customary to speak about \emph{equivalent norms}, meaning pairs that differ by multiplication by a constant, but also about the coarser relation of \emph{topologically equivalent} norms (two norms are topologically equivalent if a sequence converges in one if and only if it converges in the other). Here we describe two specific norms of interest and describe explicit mechanisms for producing converging sequences in the operator norm. 

The most fundamental norm on graphons is
$\displaystyle \|W\|_{\square}:=\left|\sup_{U,V\subseteq [0,1]}\iint_{U\times V} W(u,v)dudv\right|$. Known as the cut norm,
its importance stems from the fact that two graphons differing by a small cut norm must have similar induced subgraphs in the sense of the counting Lemma of Lovasz and Szegedi (see~\cite[Lemma 10.23]{Lovasz} for details).

As an analytic object however, the cut norm is often unwieldy, so it is typically bounded via more easily computable norms. More precisely, the space of graphons admits two topologically inequivalent norms represented by the operator and Hilbert-Schmidt norms of graphon shift operators respectively (Example~\ref{Ex: ER} shows that they are indeed inequivalent and why this is important in the present context). 

Recall that the operator is defined by
$\displaystyle \|T_W\|_{\rm op}:=\sup_{\|f\|,\|g\|\leq 1}\left|\int_0^1\int_0^1W(u,v)f(u)g(v)dudv\right|$
which is the induced norm of $T_W$ as operator from $L^2([0,1])$ to $L^2([0,1])$. It is topologically equivalent to the cut norm (see Appendix \ref{app.graphon norms}). The Hilbert-Schmidt (HS) norm of $T_W$ is the $\ell^2$-norm of the eigenvalues of $T_W$ or equivalently the norm $\|W\|_{L^2}$ thinking of $W$ as a function in the square.


\textbf{Graphons as generative models.}
Given a graphon $W$ we explicitly construct families of graphs of increasing size which have $W$ as limit. The family associated to a graphon provides a practical realization of the intuitive idea of a collection of graphs which ``represent a common phenomenon''. Explicitly constructing such families is of considerable practical importance since they provide us with a controlled setting in which properties like transferability can be tested experimentally over artificially generated data.

Assume $W(x,y)$ is a given graphon. For every integer $n$ we fix a finite set of equispaced vertices as above and a collection of intervals $I_j:=[v_j,v_{j+1})$ for $j=1,\dots,n-1$ and $I_n:=[0,v_1)\cup[v_{n},1]$. We will produce two kinds of undirected graphs with vertex set $V^{(n)}:=\{v_1,\dots,v_n\}$:
\begin{enumerate}
\item A deterministic weighted graph, the \emph{weighted template graph} $H_n$ with vertex set $V^{(n)}$ and shift operator $S(v_i,v_j):=\frac{\iint_{I_i\times I_j}W(x,y)dxdy}{\mu(I_i\times I_j)}$
on the edge $(v_i,v_j)$.
\item A random graph, the \emph{graphon-Erdos-Renyi} graph $G_n$ with vertex set $V^{(n)}$ and shift operator $S(v_i,v_j)\in \{0,1\}$ sampled from a Bernoulli distribution with probability $W(v_i,v_j)$, which is independent for distinct pairs of vertices.
\end{enumerate}
The main result in this Section is that, under mild assumptions on the function $W$, the weighted template graphs and the random graphon-Erdos-Renyi have induced shift operators converging to $T_W$ in suitable norms (see Appendix~\ref{Sec: proofs} for a proof). Note that the second part of the theorem can be seen as a consequence of the analysis in \cite{maskey2023transferability} as well.
\begin{theorem}\label{thm: sampling} For any positive integer $n$ let $\hat{H}^{(n)}$ and $\hat{G}^{(n)}$ be the graphons induced by $H_n$ and $G_n$. The following statements hold
\begin{enumerate}
\item If $W$ is continuous then $\|T_W-T_{\hat{H}^{(n)}}\|_{\rm HS}\rightarrow 0$
\item If $W$ is Lipschitz continuous then  $\|T_W-T_{\hat{G}^{(n)}}\|_{\rm op}\rightarrow 0$ almost surely.
\end{enumerate}
\end{theorem}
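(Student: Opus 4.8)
The plan is to reduce both statements to estimates on \emph{block graphons} (piecewise-constant graphons subordinate to the partition $\{I_1,\dots,I_n\}$) via two elementary facts. First, for any kernel $K\in L^2([0,1]^2)$ the difference $T_W-T_K$ is the integral operator with kernel $W-K$, so $\|T_W-T_K\|_{\rm HS}=\|W-K\|_{L^2}$ and $\|T_W-T_K\|_{\rm op}\le\|W-K\|_{L^2}$. Second, if $K$ is the block graphon attached to an $n\times n$ matrix $M$ on our uniform partition (each block of measure $1/n$), then a direct computation on the subspace of block-constant functions (on whose orthogonal complement $T_K$ vanishes) gives $\|T_K\|_{\rm op}=\frac1n\|M\|_{\rm op}$, the right-hand side being the spectral norm of $M$; this is consistent with the identity in Theorem~\ref{thm: comparison}. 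I will use the block average $W_n$ of $W$ (equal on each $I_i\times I_j$ to the mean of $W$ there), which satisfies $\hat H^{(n)}=W_n$ because those means are the shift-matrix entries of $H_n$, and the vertex interpolant $W_n^\bullet$ equal to $W(v_i,v_j)$ on $I_i\times I_j$, whose matrix is the matrix of means of $G_n$.

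For part $(1)$, the first fact gives $\|T_W-T_{\hat H^{(n)}}\|_{\rm HS}=\|W-W_n\|_{L^2}$, so it suffices to show $W_n\to W$ in $L^2$. On any block $I_i\times I_j$ with $i,j\le n-1$ the diameter is $O(1/n)$, so by uniform continuity of $W$ on the compact square its oscillation there is $o(1)$, whence $|W-W_n|=o(1)$ uniformly on the union of such blocks. On the remaining ``wrap-around'' blocks (those involving $I_n$), which have total measure $O(1/n)$, I bound $|W-W_n|\le1$, so their $L^2$ contribution is $O(1/\sqrt n)$. Combining, $\|W-W_n\|_{L^2}\to0$.

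For part $(2)$ I split by the triangle inequality
\[\|T_W-T_{\hat G^{(n)}}\|_{\rm op}\le \|T_W-T_{W_n^\bullet}\|_{\rm op}+\|T_{W_n^\bullet}-T_{\hat G^{(n)}}\|_{\rm op},\]
isolating a deterministic discretization error from a random fluctuation. For the first term, $\|T_W-T_{W_n^\bullet}\|_{\rm op}\le\|W-W_n^\bullet\|_{L^2}$, and Lipschitz continuity gives $|W-W_n^\bullet|=O(1/n)$ on the non-wrap-around blocks, while the wrap-around set again contributes $O(1/\sqrt n)$; hence this term is $O(1/\sqrt n)\to0$. For the second term both kernels are block graphons on the same partition, so by the second fact
\[\|T_{W_n^\bullet}-T_{\hat G^{(n)}}\|_{\rm op}=\tfrac1n\,\|S_n-P_n\|_{\rm op},\]
where $S_n$ is the random $\{0,1\}$ adjacency matrix of $G_n$ and $P_n:=[\,W(v_i,v_j)\,]$ its matrix of means. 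The off-diagonal part $E_n$ of $S_n-P_n$ is a symmetric random matrix with independent, centered, bounded ($|(E_n)_{ij}|\le1$) entries, while the deterministic diagonal part has spectral norm $\le1$ and contributes only $O(1/n)$.

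The crux is the almost-sure bound $\frac1n\|E_n\|_{\rm op}\to0$. The mean is controlled by $\mathbb{E}\,\|E_n\|_{\rm op}=O(\sqrt n)$, a standard spectral-norm estimate for symmetric matrices with independent bounded entries (matrix Bernstein or an $\varepsilon$-net argument). For the almost-sure statement the delicate point is to obtain \emph{dimension-free} fluctuations: since $A\mapsto\|A\|_{\rm op}$ is convex and $1$-Lipschitz in the Frobenius norm, it is an $O(1)$-Lipschitz convex function of the independent bounded entries of $E_n$, so Talagrand's concentration inequality for convex Lipschitz functions yields sub-Gaussian deviations on the $O(1)$ scale and hence $\Pr\big[\|E_n\|_{\rm op}>C\sqrt n\big]\le 4e^{-cn}$, which is summable in $n$. (A naive bounded-differences/McDiarmid bound would only control fluctuations on the scale $n$ and is insufficient, since it does not beat the $O(\sqrt n)$ mean.) Borel--Cantelli then gives $\|E_n\|_{\rm op}=O(\sqrt n)$ almost surely for all large $n$, so $\frac1n\|E_n\|_{\rm op}=O(1/\sqrt n)\to0$ almost surely, which together with the deterministic term proves $(2)$. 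Alternatively, one may phrase the random estimate through the cut norm and invoke its topological equivalence with the operator norm recorded in Section~\ref{sec.graphons}.
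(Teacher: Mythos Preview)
Your argument is correct. For part~$(1)$ it is essentially the paper's proof: both control the $L^p$ distance between $W$ and its block average via uniform continuity on the compact square; the paper works in $L^1$ and passes to the HS norm through the inequality $\|T\|_{\rm HS}\le\|W\|_{L^1}^{1/2}$ from Appendix~\ref{app.graphon norms}, whereas you work directly in $L^2$, which equals the HS norm. Your explicit handling of the wrap-around interval $I_n$ is a detail the paper glosses over.

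For part~$(2)$ you use the same deterministic/random split through the vertex interpolant (the paper's $B^{(n)}$ is your $W_n^\bullet$), but the treatment of the random fluctuation is genuinely different. The paper works in the cut-type norm $\|\cdot\|_\Diamond$: since both block graphons are constant on the rectangles $I_i\times I_j$, the $\Diamond$-norm of their difference reduces to a maximum over the $4^n$ pairs $(a,b)\in\{0,1\}^{2n}$ of $\bigl|\frac{1}{n^2}\sum_{i,j}a_ib_j\,(W(v_i,v_j)-S_{ij})\bigr|$; each such sum is controlled by scalar Bernstein, the union bound over $4^n$ is absorbed by the $e^{-c\,n^2\epsilon^2}$ tail, and Borel--Cantelli concludes in $\|\cdot\|_\Diamond$, which is then transferred to $\|\cdot\|_{\rm op}$ via the norm equivalence recorded in Appendix~\ref{app.graphon norms}. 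Your route instead stays in the operator norm throughout: the block identity $\|T_K\|_{\rm op}=\frac1n\|M\|_{\rm op}$ reduces the question to $\frac1n\|E_n\|_{\rm op}\to0$ a.s., which you obtain from the $O(\sqrt n)$ expectation together with Talagrand's convex-Lipschitz concentration to get summable tails. The paper's approach is more elementary (only scalar concentration plus a union bound) but leans on the $\Diamond$/operator-norm equivalence; yours is self-contained in the operator norm at the price of invoking Talagrand. You in fact flag the paper's route as your closing alternative.
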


\begin{examplex}\label{Ex: ER} Fix $p\in (0,1)$ and let $W(x,y)=p$ for $x\neq y$ and zero otherwise. The graphon Erdos-Renyi graphs $G^{(n)}$ constructed from $W$ as in $(2)$ above are precisely the usual Erdos-Renyi graphs. Theorem~\ref{thm: sampling} part $(2)$ guarantees that $\|T_W-T_{\hat{G}^{(n)}}\|_{\rm op}\rightarrow 0$ almost surely so this is a convergent graph family in the operator norm. By contrast we will show that the sequence of $T_{\hat{G}^{(n)}}$ \emph{does not converge} to $T_W$ in the Hilbert-Schmidt norm by proving that $\|T_{\hat{G}^{(n)}}(x,y)-T_W(x,y)\|_{\rm HS}>\min(p,1-p)>0$ almost surely. To this end note that for every $n\in \mathbb{N}$ and every $(x,y)\in [0,1]^2$ with $x\neq y$ the difference $|W_{\hat{G}^{(n)}}(x,y)-W(x,y)|\geq \min(p,1-p)$ since the term on the left is either $0$ or $1$. We conclude that $\|T_{\hat{G}^{(n)}}(x,y)-T_W(x,y)\|_{\rm HS} = \|W_{\hat{G}^{(n)}}(x,y)-W(x,y)\|_{L^2([0,1]^2)}\geq \min(p,1-p)>0$ for every $n$ and therefore the sequence fails to converge to zero almost surely.
\end{examplex}
The previous example is important for two reasons. First it shows that the operator and Hilbert-Schmidt norm are not topologically equivalent in the space of graphons. Second, the simplicity of the example shows that for applications to transferability, we should focus on the operator norm. More strongly, it proves that trasferability results that depend on the Hilbert-Schmidt norm are not applicable even to the simplest families of examples, namely Erdos-Renyi graphs.

\section{Universal transferability}

Our next result combines perturbation inequalities for graphon-tuple networks and Theorem~\ref{thm: comparison} which compares graph-tuple networks and their induced graphon-tuple counterparts resulting in a \emph{transferability} inequality. As a corollary of this inequality we prove a universal transferability result which shows that \emph{every} architecture is transferable in a converging sequence of graphon-tuples, in the sense that the transferability error goes to zero as the index of the sequence goes to infinity. This result is interesting and novel even for the case of graphon-graph transferability (i.e. when $k=1$).

\begin{theorem}\label{thm: transf} Let $\vec{W}$ be a graphon-tuple and let $\vec{G}$ be a graph-tuple with equispaced vertex set $V\subset [0,1]$. Let $H$ be any $B\times A$ matrix with entries in $\RR\langle X_1,\dots, X_k\rangle$ and let $\hat{\Psi}(H,\vec{W}):L^A\rightarrow L^B$ (resp $\hat{\Psi}(H,\vec{G}):\RR[V]\rightarrow \RR[V]$ ) denote the graphon-tuple neural layer (resp. graph-tuple neural layer) with ReLu activation defined by $H$. If $\hat{G}_i$ denotes the graphon induced y $G_i$ then for every $f\in L$ the \emph{transferability error} for $f$ satisfies
\begin{multline*}
\left\|\hat{\Psi}(H,\vec{T_W})(f)-i_V\circ \hat{\Psi}\left(H,\frac{1}{|V|}\vec{T_G}\right)(p_V(f))\right\|_{\boxed{*}} \leq \\
    \|f-i_V\circ p_V (f)\|_{\boxed{*}}\max_{b\in [B]}\left(\sum_{a\in [A]} C(h_{b,a})\right) +
\|f\|\max_{b\in [B]}\left(\sum_{a\in [A]}\sum_{j=1}^k C_j(h_{b,a})\|T_{W_j}-T_{\hat{G}_j}\|_{\rm op}\right).
\end{multline*}
\end{theorem}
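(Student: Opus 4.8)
The plan is to reduce the transferability error to a single application of the perturbation inequality (Theorem~\ref{thm: perturb}) for the two nonexpansive graphon operator tuples $\vec{T_W}$ and $\vec{T_{\hat G}}$, once the transferred output has been correctly identified as a graphon layer. The crux is an exact intertwining identity relating the normalized graph layer to the layer of the induced graphon, and the subtle point is that this identity forces a comparison of the graphon signal $f$ with its piecewise-constant projection $i_V\circ p_V(f)$ rather than with $f$ itself --- which is exactly where the first term of the bound originates.

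First I would establish, at the level of the linear filters, the operator identity
\[
i_V\circ \Psi\!\left(H,\tfrac{1}{|V|}\vec{T_G}\right)\circ p_V \;=\; \Psi(H,\vec{T_{\hat G}})\circ (i_V\circ p_V).
\]
In the spirit of Theorem~\ref{thm: comparison}, each positive-degree monomial word $X^\alpha$ intertwines cleanly because $T_{\hat G_j}=i_V\circ \tfrac{1}{|V|}T_{G_j}\circ p_V$ and $p_V\circ i_V=\mathrm{id}$ telescopes the interior projections (so that the rightmost factor absorbs $i_V\circ p_V$ and makes the two sides agree on $f$ and on $i_V\circ p_V(f)$ alike); the only discrepancy arises from the constant (degree-zero) term of each entry $h_{b,a}$, which acts as the identity on the graph side but as the projection $i_V\circ p_V$ on the graphon side. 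Composing with the pointwise ReLU $\sigma$ and using that $\sigma$ commutes with $i_V$ (since $i_V$ merely copies vertex values onto the intervals $I_j^{(n)}$), I obtain the layer-level identity
\[
i_V\circ \hat{\Psi}\!\left(H,\tfrac{1}{|V|}\vec{T_G}\right)(p_V(f)) \;=\; \hat{\Psi}(H,\vec{T_{\hat G}})(i_V\circ p_V(f)).
\]

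With this identity the transferability error equals $\big\|\hat{\Psi}(H,\vec{T_W})(f)-\hat{\Psi}(H,\vec{T_{\hat G}})(i_V\circ p_V(f))\big\|_{\boxed{*}}$, a genuine perturbation of both the operator tuple (from $\vec{T_W}$ to $\vec{T_{\hat G}}$) and the signal (from $f$ to $i_V\circ p_V(f)$). I would then check that both tuples are nonexpansive: a symmetric kernel with values in $[0,1]$ satisfies $\|T_W\|_{\rm op}\le 1$ by Cauchy--Schwarz together with $\int_0^1 W(u,v)\,dv\le 1$, and this covers both the $W_j$ and the induced graphons $\hat G_j$. Applying Theorem~\ref{thm: perturb} with $\vec W=\vec{T_W}$, $\vec Z=\vec{T_{\hat G}}$ and signals $f,\ i_V\circ p_V(f)$ produces the stated two terms, with perturbation multiplier $m=\min(\|f\|_{\boxed{*}},\|i_V\circ p_V(f)\|_{\boxed{*}})$. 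Finally I would bound $m\le\|f\|_{\boxed{*}}$, which is legitimate because $i_V\circ p_V$ is the $L^2$ conditional-expectation projection onto piecewise-constant functions, hence a contraction; this recovers the $\|f\|$ factor multiplying the operator-norm term.

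The main obstacle is precisely the first, bookkeeping step: getting the constant-term contribution right. Theorem~\ref{thm: comparison} is stated only for polynomials without constant term, so I must extend its intertwining to the degree-zero part by hand and track that the identity operator on $\RR[V]$ lifts to the projection $i_V\circ p_V$ rather than to the identity on $L$. Recognizing that this mismatch is exactly what produces the $\|f-i_V\circ p_V(f)\|_{\boxed{*}}$ term --- and that it is then naturally absorbed by invoking the perturbation inequality with two \emph{distinct} signals rather than one --- is the conceptual heart of the argument; everything else is a direct invocation of Theorems~\ref{thm: comparison} and~\ref{thm: perturb}.
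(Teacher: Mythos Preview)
Your proposal is correct and follows essentially the same route as the paper: identify the transferred output as $\hat{\Psi}(H,\vec{T_{\hat G}})(i_V\circ p_V(f))$ via the intertwining relation (the paper invokes Theorem~\ref{thm: comparison} on the input $i_V\circ p_V(f)$ and uses $p_V\circ i_V=\mathrm{id}$), then apply Theorem~\ref{thm: perturb} to the two nonexpansive graphon tuples $\vec{T_W}$, $\vec{T_{\hat G}}$ and the two signals $f$, $i_V\circ p_V(f)$. Your explicit treatment of the constant-term contribution and of the inequality $m\le\|f\|_{\boxed{*}}$ (via contractivity of the conditional-expectation projection $i_V\circ p_V$) fills in details that the paper's proof leaves implicit.
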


We are now able to prove the following \emph{Universal transferability} result:

\begin{theorem}\label{thm: univ_transf} Suppose $\vec{G^{(N)}}:=(G_1^{(N)},\dots, G_k^{(N)})$ is a sequence of graph-tuples having vertex set $V^{(N)}\subseteq [0,1]$. If the vertex set is equispaced for every $N$ and the sequence converges to a graphon-tuple $\vec{W}$ in the sense that $\|T_{\hat{G}^{(N)}_j}-T_{W_j}\|_{\rm op}\rightarrow 0$  as $N\rightarrow\infty$ for $j=1,\dots, k$ then every graphon-tuple neural network on $\vec{W}$ transfers. More precisely, for every neural network architecture $\Phi(\vec{H},\bullet)$ and every essentially bounded function $f\in L^2([0,1])$ the quantity  
\[\left\|\Phi(\vec{H},\vec{T_W})(f)-i_{V^{(N)}} \Phi\left(\vec{H},\frac{1}{|V^{(N)}|}\vec{T_{G}}\right)(p_{V^{(N)}}(f))\right\|_{\boxed{*}}\]
converges to zero as $N\rightarrow\infty$. Furthermore the convergence is uniform among functions $f$ with a fixed Lipschitz constant.
\end{theorem}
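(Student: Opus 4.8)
The plan is to reduce the end-to-end transferability error to a single application of the iterated perturbation inequality between two nonexpansive graphon-tuples, and then to show that each of the two resulting error terms vanishes as $N\to\infty$.

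First I would rewrite the quantity to be estimated purely in terms of graphon operators. Writing $n:=|V^{(N)}|$, $i_n:=i_{V^{(N)}}$, $p_n:=p_{V^{(N)}}$, and letting $\vec{T_{\hat G}}:=(T_{\hat G_1^{(N)}},\dots,T_{\hat G_k^{(N)}})$ be the tuple of induced graphon shift operators, Theorem~\ref{thm: comparison} applied to each layer (which requires the filters to have no constant term) gives $\Phi(\vec H,\vec{T_{\hat G}})=i_n\circ\Phi(\vec H,\vec{T_G}/n)\circ p_n$. Using $p_n\circ i_n=\mathrm{id}$, hence $p_n\circ i_n\circ p_n=p_n$, and setting $g:=i_n(p_n(f))$, this yields
\[
i_n\,\Phi\!\left(\vec H,\tfrac{1}{n}\vec{T_G}\right)\!(p_n(f))=\Phi(\vec H,\vec{T_{\hat G}})(g).
\]
So the transferability error equals $\|\Phi(\vec H,\vec{T_W})(f)-\Phi(\vec H,\vec{T_{\hat G}})(g)\|_{\boxed{*}}$, i.e.\ exactly a perturbation of the same architecture between the operator-tuples $\vec{T_W}$ and $\vec{T_{\hat G}}$ evaluated at the inputs $f$ and $g$.

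Next I would verify that both tuples are nonexpansive: any graphon $U$ with values in $[0,1]$ on a probability space satisfies $\|T_U\|_{\rm op}\leq\|U\|_{L^2}\leq 1$, which applies to each $W_j$ and to each induced graphon $\hat G_j^{(N)}$. This lets me invoke the end-to-end bound obtained by iterating Corollary~\ref{cor: perturb inductive} (equivalently, applying Theorem~\ref{thm: perturb} layer by layer), producing an estimate of the form
\[
\|\Phi(\vec H,\vec{T_W})(f)-\Phi(\vec H,\vec{T_{\hat G}})(g)\|_{\boxed{*}}\leq C_1\,\|f-g\|_{\boxed{*}}+C_2\,\max_{1\le j\le k}\|T_{W_j}-T_{\hat G_j}\|_{\rm op},
\]
where $C_1$ is a product of the layerwise expansion factors $\max_b\sum_a C(h_{b,a}^{(d)})$ and $C_2$ is a finite constant built from the expansion constants $C_j(h_{b,a}^{(d)})$ together with the intermediate-signal norms $m_d$. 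Since $i_np_n$ is the orthogonal projection onto functions constant on the $I_j^{(N)}$, it is a contraction, so $\|g\|\leq\|f\|$ and each $m_d$ is bounded by $\|f\|$ times architecture constants; in particular $C_1,C_2<\infty$ depend only on $\vec H$ and on $\|f\|$.

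Finally I would drive both terms to zero. The operator term vanishes by the standing hypothesis $\max_j\|T_{W_j}-T_{\hat G_j^{(N)}}\|_{\rm op}\to 0$. For the input term, $g=i_np_nf$ is the conditional expectation of $f$ onto the increasingly fine partition $\{I_j^{(N)}\}$, so $\|f-i_np_nf\|\to 0$ for every $f\in L^2$ (approximate $f$ by a continuous function and use that $i_np_n$ is a contraction). For the uniform statement, if $f$ is $L$-Lipschitz then on each $I_j^{(N)}$ the value of $i_np_nf$ is the average of $f$, whence $|f(x)-(i_np_nf)(x)|\leq L/n$ pointwise and $\|f-i_np_nf\|\leq L/n$; this rate depends only on $L$, and together with the essential-boundedness bound on $\|f\|$ controlling $C_2$ it gives uniform convergence over functions with a fixed Lipschitz constant and a fixed essential bound. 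The main obstacle is not any single estimate but the bookkeeping that keeps $C_1,C_2$ uniformly finite: one must check that passing both the input and the operator perturbations through the ReLU layers only multiplies them by the finite architecture-dependent expansion constants and by norms controlled by $\|f\|$, which is precisely what the nonexpansiveness of the tuples and the contraction property of $i_np_n$ supply.
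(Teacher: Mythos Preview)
Your proposal is correct and follows essentially the same route as the paper: the paper's proof invokes Theorem~\ref{thm: transf} (whose own proof is precisely the combination of Theorem~\ref{thm: comparison} and Theorem~\ref{thm: perturb} you carry out here) inductively across layers and then observes that both the operator-difference term and the $\|f-i_np_nf\|$ term vanish, the latter uniformly for Lipschitz $f$. Your write-up is in fact more careful than the paper's on two points: you flag the no-constant-term hypothesis needed to invoke Theorem~\ref{thm: comparison}, and you note that the uniform statement really needs a common bound on $\|f\|$ (not just the Lipschitz constant) to control the $C_2$ factor.
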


\section{Training with stability guarantees} \label{sec:stable_training}

Following our perturbation inequalities (i.e., Theorem~\ref{thm: perturb} and Corollary~\ref{cor.stability}) we propose a training algorithm to obtain a GtNN that enforces stability by constraining all the expansion constants $C(h)$ and $C_j(h)$.
Consider a GtNN $\Phi(\vec{H},\vec{T}_G)$ and nonexpansive operator $k$-tuples $\vec{T}_G$.
Denote the set of $k+1$ expansion constants for each layer $d = 0, \dots, N-1$ as 
\[C(H^{(d)}):=\max_{b\in [\alpha_{d+1}]}\sum_{a\in [\alpha_{d}]} C(h_{b,a}^{(d)}) \quad \text{ and } \quad 
C_j(H^{(d)}):= \max_{b\in [\alpha_{d+1}]}\sum_{a\in [\alpha_{d}]}h_{b,a}^{(d)}\text{ for $j=1,\dots, k$},\]
and write $\vec{C}(\vec{H}) = (C(H^{(d)}))_{d=0}^{N-1}$ and $\vec{C}_j(\vec{H}) = (C_j(H^{(d)}))_{d=0}^{N-1}$ for $j=1,\dots, k$.
Given $k+1$ vectors of target bounds $\vec{C} := (C^{(d)})_{d=0}^{N-1}$ and $\vec{C}_j := (C_j^{(d)})_{d=0}^{N-1}$ for $j=1,\dots, k$, and training data $(x_i,y_i)\in \calF^{\alpha_0}\times \calF^{\alpha_N}$ for $i\in I$, we train the network by a constrained minimization problem
\begin{align*}
\min_{c} \quad \frac{1}{|I|}\sum_{i\in I}\ell(\Phi(\vec{H}(c),\vec{T}_G)(x_i),y_i) 
\quad \textrm{s.t.}\quad &\vec{C}(\vec{H}(c)) \leq \vec{C}, \quad
 \vec{C}_j(\vec{H}(c)) \leq \vec{C}_j \; \text{for } j = 1,\dots, k,
\end{align*}
where $\ell(\cdot,\cdot)$ is any nonnegative loss function depending on the task, and $c$ denotes all the polynomial coefficients in the network.
If we pick $\vec{C}$ to be an all ones vector (or smaller), by Corollary~\ref{cor.stability}, the perturbation stability is guaranteed to scale linearly with the number of layers $N$.

To approximate the solution of the constrained minimization problem we use a penalty method,
\begin{align}
\label{eq: stable penalty problem}
\min_{c} \quad & \frac{1}{|I|}\sum_{i\in I}\ell(\Phi(\vec{H}(c),\vec{T}_G)(x_i),y_i) + 
 \lambda [p(\vec{C}(\vec{H}(c)) - \vec{C}) + \sum_{j=1}^k p(\vec{C}_j(\vec{H}(c)) - \vec{C}_j)],
\end{align}
where $p(\cdot)$ is a componentwise linear penalty function $p(\vec{C}) = (p(C^{(d)}))_{d=0}^{N-1}$ with $p(C^{(d)}) = \max(0, C^{(d)})$.
The stable GtNN algorithm picks a fixed large enough penalty coefficient $\lambda$ and trains the network with local optimization methods.

\begin{figure}[t]
    \centering
    \includegraphics[width=0.28\textwidth,trim={91 236 91 237},clip]{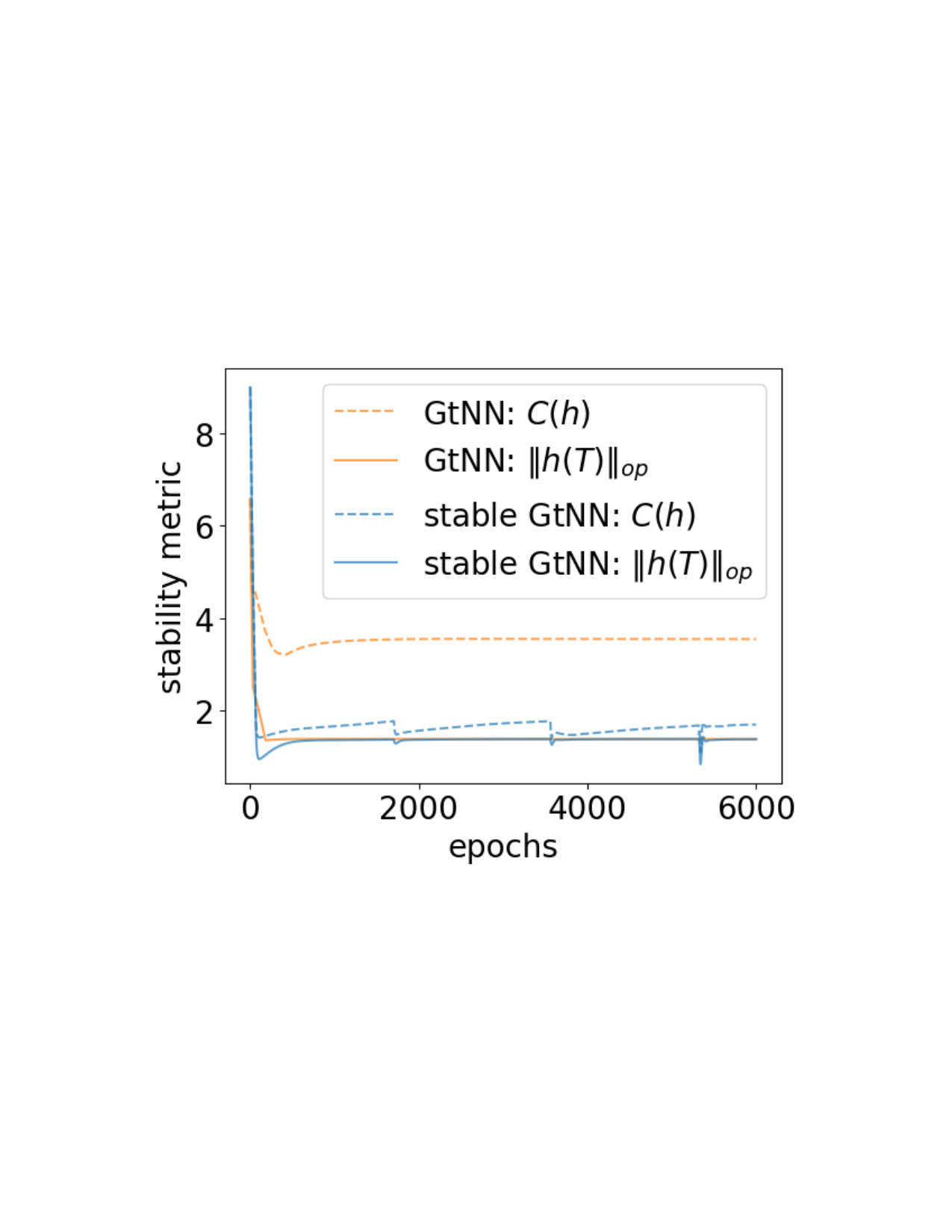}
    \includegraphics[width=0.28\textwidth,trim={91 236 91 237},clip]{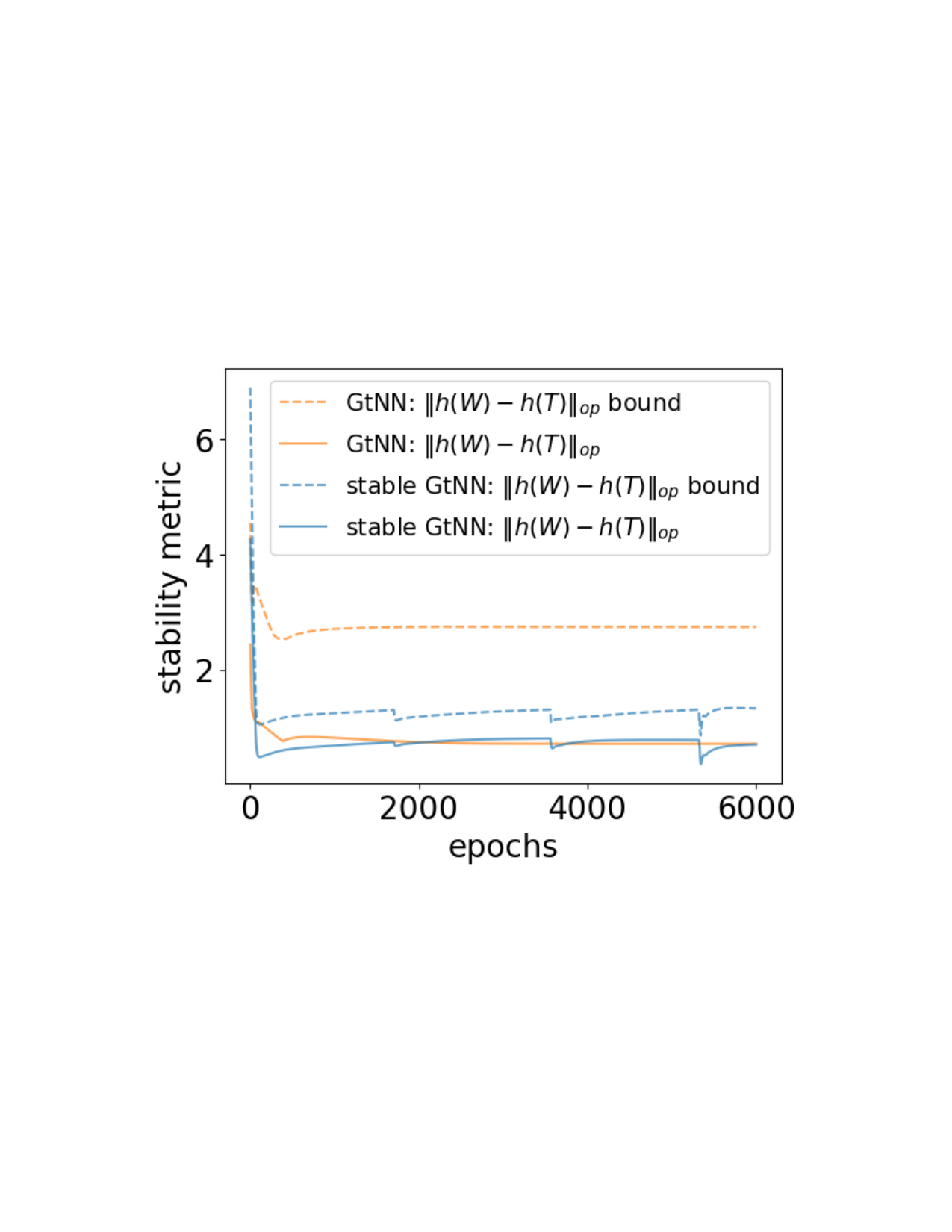}
    \includegraphics[width=0.28\textwidth,trim={91 236 91 237},clip]{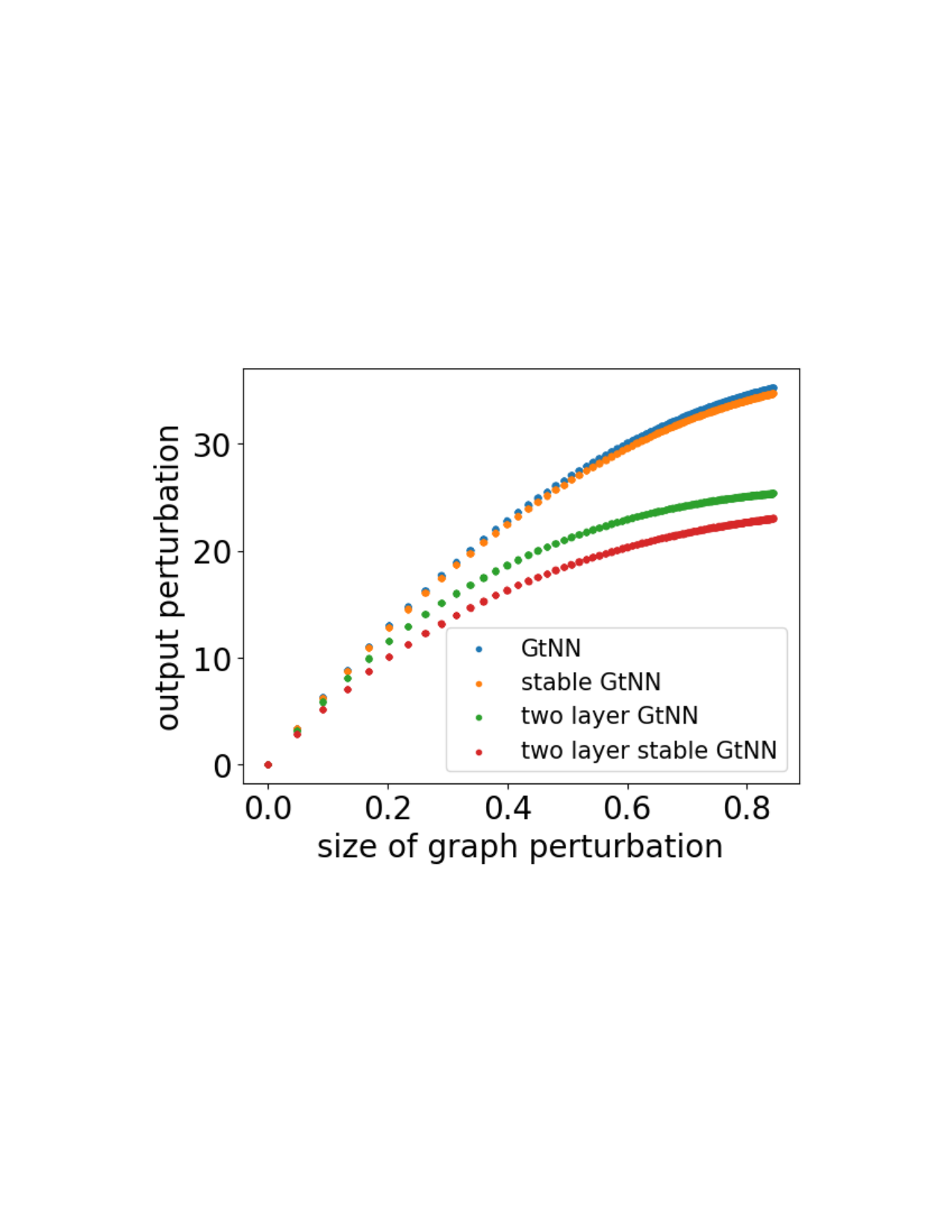}
    \caption{\small We assess the tightness of our theoretical results on a regression problem on a synthetic data toy example consisting of two weighted circulant graphs. See Appendix \ref{app.experiment details stability} for details. \textbf{(Left)} 
    Numerical stability bound $C(h)$ (dashed) and stability metrics $\|h(\vec{T})\|_{\rm op}$ (solid) with respect to input signal perturbation as a function of the number of epochs for both the standard (1-layer) GtNN (orange) and (1-layer) stable GtNN (blue). 
    \textbf{(Middle)} Similar plot for the stability metrics with respect to the graph perturbation $\|h(\vec{W})-h(\vec{Z})\|_{\rm op}$ and its upper bound (Lemma~\ref{lem: operator_nonexpansive}~part 2 and 3b). 
    For this plot we take $\vec{W} = \vec{T}$, and $\vec{Z}$ is a random perturbation from $\vec{T}$ with $\|Z_1 - W_1\|_{\rm op} \approx \|Z_2 - W_2\|_{\rm op} \approx 0.33$. \textbf{(Right)}
    For all four models, compute the 2-norm of the vector of output perturbations from Equation \eqref{eq.perturbation} over the test set for various sizes of graph perturbation $(\|T_1 - W_1\|_{\rm op} + \|T_2 - W_2\|_{\rm op})/2$, where the additive graph perturbation $T_1 - W_1$ and $T_2 - W_2$ are symmetric matrices with iid Gaussian entries.
    In addition, each $T_j$ and $W_j$ are normalized such that $\|T_j\|_{\rm op} \leq 1$ and $\|W_j\|_{\rm op} \leq 1$ for $j=1,2$, so they are nonexpansive operator-tuple networks.
    \textbf{(All)} We observe that adding stability constraints does not affect the prediction performance: the testing R squared value for GtNN is $0.6866$, while for stable GtNN is $0.6543$.
    \label{fig: stability metrics as epochs}
    }
    \includegraphics[scale=0.38, trim={15 5 15 20},clip]{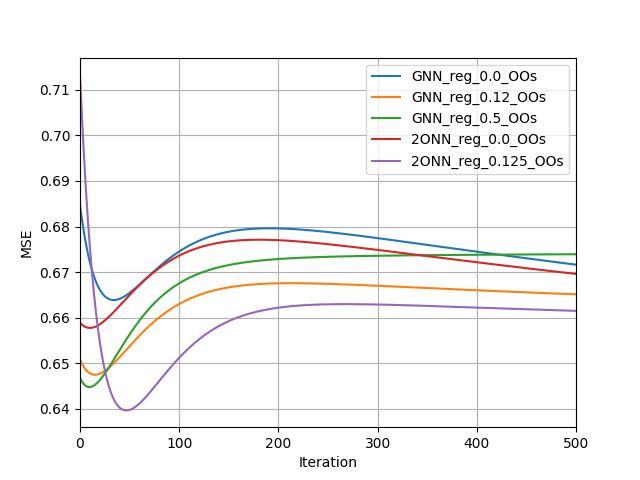}
    \includegraphics[scale=0.38,trim={15 5 15 20},clip]{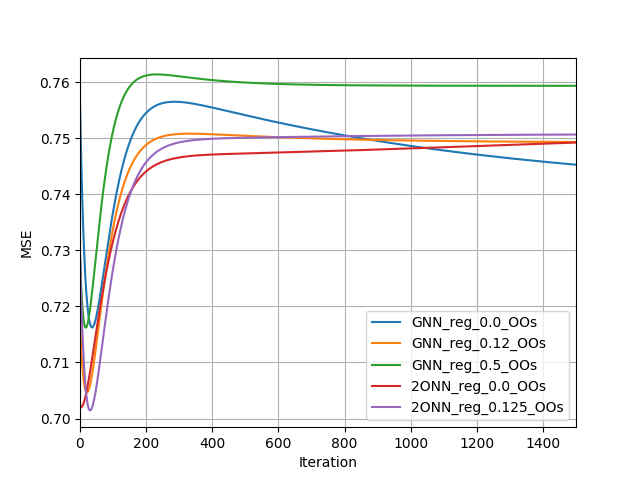}
\caption{ \small This is an experiment on the MovieLens 100k database,
a collection of movie ratings given by a set of 1000 users \cite{MovieLens} to 1700 movies. Using collaborative filtering techniques \cite{RibeiroCollaborative} we extract two weighted graphs that we use to predict ratings of movies by user from a held out test set. See details in Appendix \ref{app.experiment movie recommendation}. We report the mean squared error (MSE) in the test set as a function of the number of training iterations \textbf{(Left)} from $0$ to $500$ and \textbf{(Right)} from $0$ to $1500$ for the movie recommendation system experiments.
We compare the two models GtNN on the tuple of two graphs (2ONN) and GNN on the best single graph between those two (GNN) on various ridge-regularized versions (the legend contains the values of the chosen regularization constants).
\label{Fig: out_of_sample_error}
\vspace{-15pt}}
\end{figure}

\section{Experimental data and numerical results}
\label{sec:experiments}
We perform three experiments\footnote{Code available:
\url{https://github.com/Kkylie/GtNN_weighted_circulant_graphs} \\ and \url{https://github.com/mauricio-velasco/operatorNetworks}}: (1) we test the tightness of our theoretical bounds on a simple regression problem on a synthetic dataset consisting of two weighted circulant graphs (see Figure \ref{fig: stability metrics as epochs} and Appendix \ref{app.experiment details stability} for details)
(2) we assess the transferability of the same model (Appendix \ref{app.Experiments transferability}), and
(3) we run experiments on a real-world dataset of a movie recommendation system where the information is summarized in two graphs via collaborative filtering approaches \cite{RibeiroCollaborative} and it is combined to infer ratings by new users via the GtNN model (see Figure \ref{Fig: out_of_sample_error} and Appendix \ref{app.experiment movie recommendation}).

\section{Conclusions}
In this paper, we introduce graph-tuple networks (GtNNs), a way of extending GNNs to a multimodal graph setting through the use tuples of non-commutative operators endowed with appropriate block-operator norms.
We show that GtNNs have several desirable properties such as stability to perturbations and a universal transfer property on convergent graph-tuples, where the transferability error goes to zero as the graph size goes to infinity.
Our transferability theorem improves upon the current state-of-the-art even for the GNN case.
Furthermore, our error bounds are expressed in terms of computable quantities from the model.
This motivates a novel algorithm to enforce stability during training.
Experimental results show that our transferability error bounds are reasonably tight, and that our algorithm increases the stability with respect to graph perturbation.
They also suggest that the transferability theorem holds for sparse graph tuples.
Finally, the experiments on the movie recommendation system suggest that allowing for architectures based on GtNNs is of potential advantage in real-world applications.

\section*{Acknowledgments}
We thank Alejandro Ribeiro for fostering our interest in this topic through various conversations, and Teresa Huang for helpful discussions about theory and code implementation. We thank the organizing committee of the Khipu conference (Montevideo, Uruguay, 2022) for providing a setting leading to the present collaboration. SV is partially
supported by NSF CCF 2212457, the NSF–Simons Research Collaboration on the Mathematical
and Scientific Foundations of Deep Learning (MoDL) (NSF DMS 2031985), NSF CAREER 2339682, and ONR N00014-22-1-2126. Mauricio Velasco was partially supported by ANII grants FCE-1-2023-1-176172 and FCE-1-2023-1-176242. Bernardo Rychtenberg was partially supported by ANII grant FCE-1-2023-1-176242.

\bibliographystyle{plain}
\raggedright
\bibliography{ref}

\begin{thebibliography}{10}

\bibitem{boker2024fine}
Jan B{\"o}ker, Ron Levie, Ningyuan Huang, Soledad Villar, and Christopher
  Morris.
\newblock Fine-grained expressivity of graph neural networks.
\newblock {\em Advances in Neural Information Processing Systems}, 36, 2024.

\bibitem{bronstein2021geometric}
Michael~M Bronstein, Joan Bruna, Taco Cohen, and Petar Veli{\v{c}}kovi{\'c}.
\newblock Geometric deep learning: Grids, groups, graphs, geodesics, and
  gauges.
\newblock {\em arXiv preprint arXiv:2104.13478}, 2021.

\bibitem{bruna2014spectral}
Joan Bruna, Wojciech Zaremba, Arthur Szlam, and Yann LeCun.
\newblock Spectral networks and deep locally connected networks on graphs.
\newblock In {\em 2nd International Conference on Learning Representations,
  ICLR 2014}, 2014.

\bibitem{PMR3}
Landon Butler, Alejandro Parada-Mayorga, and Alejandro Ribeiro.
\newblock Learning with multigraph convolutional filters.
\newblock In {\em ICASSP 2023 - 2023 IEEE International Conference on
  Acoustics, Speech and Signal Processing (ICASSP)}, pages 1--5, 2023.

\bibitem{cai2023local}
Chen Cai.
\newblock Local-to-global perspectives on graph neural networks.
\newblock {\em arXiv preprint arXiv:2306.06547}, 2023.

\bibitem{cai2022convergence}
Chen Cai and Yusu Wang.
\newblock Convergence of invariant graph networks.
\newblock In {\em International Conference on Machine Learning}, pages
  2457--2484. PMLR, 2022.

\bibitem{cervino2022stable}
Juan Cerviño, Luana Ruiz, and Alejandro Ribeiro.
\newblock Training stable graph neural networks through constrained learning.
\newblock In {\em ICASSP 2022 - 2022 IEEE International Conference on
  Acoustics, Speech and Signal Processing}, pages 4223--4227, 2022.

\bibitem{chen2020can}
Zhengdao Chen, Lei Chen, Soledad Villar, and Joan Bruna.
\newblock Can graph neural networks count substructures?
\newblock {\em Advances in neural information processing systems},
  33:10383--10395, 2020.

\bibitem{chen2019equivalence}
Zhengdao Chen, Soledad Villar, Lei Chen, and Joan Bruna.
\newblock On the equivalence between graph isomorphism testing and function
  approximation with gnns.
\newblock {\em Advances in neural information processing systems}, 32, 2019.

\bibitem{cordonnier2023convergence}
Matthieu Cordonnier, Nicolas Keriven, Nicolas Tremblay, and Samuel Vaiter.
\newblock Convergence of message passing graph neural networks with generic
  aggregation on random graphs.
\newblock In {\em Graph Signal Processing workshop 2023}, 2023.

\bibitem{du2019graph}
Simon~S Du, Kangcheng Hou, Russ~R Salakhutdinov, Barnabas Poczos, Ruosong Wang,
  and Keyulu Xu.
\newblock Graph neural tangent kernel: Fusing graph neural networks with graph
  kernels.
\newblock {\em Advances in neural information processing systems}, 32, 2019.

\bibitem{finkelshtein2024learning}
Ben Finkelshtein, {\.I}smail~{\.I}lkan Ceylan, Michael Bronstein, and Ron
  Levie.
\newblock Learning on large graphs using intersecting communities.
\newblock {\em arXiv preprint arXiv:2405.20724}, 2024.

\bibitem{gama2020stability}
Fernando Gama, Joan Bruna, and Alejandro Ribeiro.
\newblock Stability properties of graph neural networks.
\newblock {\em IEEE Transactions on Signal Processing}, 68:5680--5695, 2020.

\bibitem{gama2018convolutional}
Fernando Gama, Antonio~G Marques, Geert Leus, and Alejandro Ribeiro.
\newblock Convolutional neural network architectures for signals supported on
  graphs.
\newblock {\em IEEE Transactions on Signal Processing}, 67(4):1034--1049, 2018.

\bibitem{gilmer2017neural}
Justin Gilmer, Samuel~S Schoenholz, Patrick~F Riley, Oriol Vinyals, and
  George~E Dahl.
\newblock Neural message passing for quantum chemistry.
\newblock In {\em International conference on machine learning}, pages
  1263--1272. PMLR, 2017.

\bibitem{hamilton2017inductive}
Will Hamilton, Zhitao Ying, and Jure Leskovec.
\newblock Inductive representation learning on large graphs.
\newblock {\em Advances in neural information processing systems}, 30, 2017.

\bibitem{huang2024approximately}
Ningyuan Huang, Ron Levie, and Soledad Villar.
\newblock Approximately equivariant graph networks.
\newblock {\em Advances in Neural Information Processing Systems}, 36, 2024.

\bibitem{huang2022local}
Ningyuan Huang, Soledad Villar, Carey~E Priebe, Da~Zheng, Chengyue Huang, Lin
  Yang, and Vladimir Braverman.
\newblock From local to global: Spectral-inspired graph neural networks.
\newblock {\em arXiv preprint arXiv:2209.12054}, 2022.

\bibitem{RibeiroCollaborative}
Weiyu Huang, Antonio~G. Marques, and Alejandro~R. Ribeiro.
\newblock Rating prediction via graph signal processing.
\newblock {\em IEEE Transactions on Signal Processing}, 66(19):5066--5081,
  2018.

\bibitem{Janson}
Svante Janson.
\newblock Graphons, cut norm and distance, couplings and rearrangements.
\newblock {\em New York journal of mathematics}, 2013.

\bibitem{joshi2019efficient}
Chaitanya~K Joshi, Thomas Laurent, and Xavier Bresson.
\newblock An efficient graph convolutional network technique for the travelling
  salesman problem.
\newblock {\em arXiv preprint arXiv:1906.01227}, 2019.

\bibitem{kenlay2021stability}
Henry Kenlay, Dorina Thano, and Xiaowen Dong.
\newblock On the stability of graph convolutional neural networks under edge
  rewiring.
\newblock In {\em ICASSP 2021-2021 IEEE International Conference on Acoustics,
  Speech and Signal Processing (ICASSP)}, pages 8513--8517. IEEE, 2021.

\bibitem{keriven2020convergence}
Nicolas Keriven, Alberto Bietti, and Samuel Vaiter.
\newblock Convergence and stability of graph convolutional networks on large
  random graphs.
\newblock {\em Advances in Neural Information Processing Systems},
  33:21512--21523, 2020.

\bibitem{kipf2016semi}
Thomas~N Kipf and Max Welling.
\newblock Semi-supervised classification with graph convolutional networks.
\newblock In {\em International Conference on Learning Representations}, 2016.

\bibitem{KostrikinManin}
Alexei~I. Kostrikin and Yuri~I. Manin.
\newblock {\em Linear algebra and geometry}, volume~1 of {\em Algebra, Logic
  and Applications}.
\newblock Gordon and Breach Science Publishers, Amsterdam, english edition,
  1997.
\newblock Translated from the second Russian (1986) edition by M. E. Alferieff.

\bibitem{le2024limits}
Thien Le and Stefanie Jegelka.
\newblock Limits, approximation and size transferability for gnns on sparse
  graphs via graphops.
\newblock {\em Advances in Neural Information Processing Systems}, 36, 2024.

\bibitem{levie2021transferability}
Ron Levie, Wei Huang, Lorenzo Bucci, Michael Bronstein, and Gitta Kutyniok.
\newblock Transferability of spectral graph convolutional neural networks.
\newblock {\em Journal of Machine Learning Research}, 22(272):1--59, 2021.

\bibitem{levin2024any}
Eitan Levin and Mateo D{\'\i}az.
\newblock Any-dimensional equivariant neural networks.
\newblock In {\em International Conference on Artificial Intelligence and
  Statistics}, pages 2773--2781. PMLR, 2024.

\bibitem{liao2020pac}
Renjie Liao, Raquel Urtasun, and Richard Zemel.
\newblock A pac-bayesian approach to generalization bounds for graph neural
  networks.
\newblock In {\em International Conference on Learning Representations}, 2020.

\bibitem{Lovasz}
L{\'a}szl{\'o} Lov{\'a}sz.
\newblock {\em Large networks and graph limits}, volume~60.
\newblock American Mathematical Soc., 2012.

\bibitem{maskey2024generalization}
Sohir Maskey, Gitta Kutyniok, and Ron Levie.
\newblock Generalization bounds for message passing networks on mixture of
  graphons.
\newblock {\em arXiv preprint arXiv:2404.03473}, 2024.

\bibitem{maskey2023transferability}
Sohir Maskey, Ron Levie, and Gitta Kutyniok.
\newblock Transferability of graph neural networks: an extended graphon
  approach.
\newblock {\em Applied and Computational Harmonic Analysis}, 63:48--83, 2023.

\bibitem{maskey2022generalization}
Sohir Maskey, Ron Levie, Yunseok Lee, and Gitta Kutyniok.
\newblock Generalization analysis of message passing neural networks on large
  random graphs.
\newblock {\em Advances in neural information processing systems},
  35:4805--4817, 2022.

\bibitem{MorenoVelasco2023}
Elvira Moreno and Mauricio Velasco.
\newblock On random walks and switched random walks on homogeneous spaces.
\newblock {\em Combinatorics, Probability and Computing}, 2023.

\bibitem{morris2019weisfeiler}
Christopher Morris, Martin Ritzert, Matthias Fey, William~L Hamilton, Jan~Eric
  Lenssen, Gaurav Rattan, and Martin Grohe.
\newblock Weisfeiler and leman go neural: Higher-order graph neural networks.
\newblock In {\em Proceedings of the AAAI conference on artificial
  intelligence}, volume~33, pages 4602--4609, 2019.

\bibitem{nowak2018revised}
Alex Nowak, Soledad Villar, Afonso~S Bandeira, and Joan Bruna.
\newblock Revised note on learning quadratic assignment with graph neural
  networks.
\newblock In {\em 2018 IEEE Data Science Workshop (DSW)}, pages 1--5. IEEE,
  2018.

\bibitem{PMR2}
Alejandro Parada-Mayorga, Landon Butler, and Alejandro Ribeiro.
\newblock Convolutional filters and neural networks with noncommutative
  algebras.
\newblock {\em IEEE Transactions on Signal Processing}, 71:2683--2698, 2023.

\bibitem{PMR1}
Alejandro Parada-Mayorga and Alejandro Ribeiro.
\newblock Algebraic neural networks: Stability to deformations.
\newblock {\em IEEE Transactions on Signal Processing}, 69:3351--3366, 2021.

\bibitem{MovieLens}
Paul Resnick, Neophytos Iacovou, Mitesh Suchak, Peter Bergstrom, and John
  Riedl.
\newblock Grouplens: an open architecture for collaborative filtering of
  netnews.
\newblock In {\em Proceedings of the 1994 ACM Conference on Computer Supported
  Cooperative Work}, CSCW '94, page 175–186, New York, NY, USA, 1994.
  Association for Computing Machinery.

\bibitem{ruiz2020graphon}
Luana Ruiz, Luiz Chamon, and Alejandro Ribeiro.
\newblock Graphon neural networks and the transferability of graph neural
  networks.
\newblock {\em Advances in Neural Information Processing Systems},
  33:1702--1712, 2020.

\bibitem{ChamonRuizRibeiro}
Luana Ruiz, Luiz~FO Chamon, and Alejandro Ribeiro.
\newblock Transferability properties of graph neural networks.
\newblock {\em IEEE Transactions on Signal Processing}, 2023.

\bibitem{ruiz2021graph}
Luana Ruiz, Fernando Gama, and Alejandro Ribeiro.
\newblock Graph neural networks: Architectures, stability, and transferability.
\newblock {\em Proceedings of the IEEE}, 109(5):660--682, 2021.

\bibitem{ruiz2024spectral}
Luana Ruiz, Ningyuan~Teresa Huang, and Soledad Villar.
\newblock A spectral analysis of graph neural networks on dense and sparse
  graphs.
\newblock In {\em ICASSP 2024-2024 IEEE International Conference on Acoustics,
  Speech and Signal Processing (ICASSP)}, pages 9936--9940. IEEE, 2024.

\bibitem{rusch2023survey}
T~Konstantin Rusch, Michael~M Bronstein, and Siddhartha Mishra.
\newblock A survey on oversmoothing in graph neural networks.
\newblock {\em arXiv preprint arXiv:2303.10993}, 2023.

\bibitem{scarselli2008graph}
Franco Scarselli, Marco Gori, Ah~Chung Tsoi, Markus Hagenbuchner, and Gabriele
  Monfardini.
\newblock The graph neural network model.
\newblock {\em IEEE transactions on neural networks}, 20(1):61--80, 2008.

\bibitem{scarselli2018vapnik}
Franco Scarselli, Ah~Chung Tsoi, and Markus Hagenbuchner.
\newblock The vapnik--chervonenkis dimension of graph and recursive neural
  networks.
\newblock {\em Neural Networks}, 108:248--259, 2018.

\bibitem{xu2018powerful}
Keyulu Xu, Weihua Hu, Jure Leskovec, and Stefanie Jegelka.
\newblock How powerful are graph neural networks?
\newblock In {\em International Conference on Learning Representations}, 2018.

\bibitem{zhang2023artificial}
Xuan Zhang, Limei Wang, Jacob Helwig, Youzhi Luo, Cong Fu, Yaochen Xie, Meng
  Liu, Yuchao Lin, Zhao Xu, Keqiang Yan, et~al.
\newblock Artificial intelligence for science in quantum, atomistic, and
  continuum systems.
\newblock {\em arXiv preprint arXiv:2307.08423}, 2023.

\end{thebibliography}

\newpage

\appendix

\section{Training and transference in operator networks} \label{app:training}

The \emph{trainable parameters} of an operator network are precisely the coefficients of the noncommutative polynomials involved. For instance, if we have feature sizes $\alpha_0,\dots,\alpha_N$ and all involved noncommutative polynomials have degree at most $d$ then the number of trainable parameters of the operator network is equal to $\frac{k^{d+1}-1}{k-1} \sum_{i=0}^{N-1}\alpha_i\alpha_{i+1}$. For each choice $\vec{c}$ of such coefficients the network defines a function $\Phi(\vec{H}(c),\vec{T}): \calF^{\alpha_0}\rightarrow \calF^{\alpha_N}$. 


For a fixed collection $\vec{c}$ of coefficients (for instance the one obtained from training on the data $(x_i,y_i)$ for $i\in I$), the resulting polynomials in their respective matrices $H^{(j)}(c)$ allow us to evaluate the trained operator network  $\Phi(\vec{H}(c),T_1,\dots, T_k)$ on any other $k$-tuple of operators $\vec{M}:=(M_1,\dots, M_k)$. Here the $M_j$ are linear maps acting on a vector space of functions $\mathcal{L}$ (possibly different from $\calF$) so evaluation defines a new network $\Phi(\vec{H}(c),\vec{M}): \mathcal{L}^{\alpha_0}\rightarrow \mathcal{L}^{\alpha_N}$. Because the coefficients $\vec{c}$ were obtained by training using the operators $\vec{T}$, this network is said to be built by \emph{transference} from $\vec{T}$ to $\vec{M}$.

\section{Graphon norms} \label{app.graphon norms}

The following facts about norms in the space of graphons are well-known:
\begin{enumerate}
\item The cut-norm is equivalent to a norm computable from the shift operator. More precisely, define
\[\|W\|_{\Diamond}:=\sup_{\|f\|_{\infty},\|g\|_{\infty}\leq 1}\left|\int_0^1\int_0^1W(u,v)f(u)g(v)dudv\right|\]
and note that it this norm is expressible in terms of $T_W$ as $\|W\|_{\Diamond}=\|T_W\|_{op,\infty,1}$ which is the operator norm of $T_W$ as map from $L^{\infty}([0,1])$ to $L^1([0,1])$. By~\cite[Equation 4.4]{Janson} the cut norm and $\|\bullet\|_{\Diamond}$ are equivalent because $\|W\|_{\square}\leq \|W\|_{\Diamond}\leq 4\|W\|_{\square}$.

\item The cut-norm and the standard operator norm are topologically equivalent. This can be seen by letting $p\rightarrow \infty$ in~\cite[Lemma E.7, part 1]{Janson} obtaining the inequality
\[\|W\|_{\Diamond}\leq \|T_W\|_{\rm op}\leq \sqrt{2}\|W\|_{\Diamond}^{\frac{1}{2}}\]

\item The elementary inequalities $\|T_W\|_{\rm op}\leq \|T_W\|_{HS}$ and $\|W\|_{\Diamond}\leq \|W\|_{L^1}$ hold where $\|W\|_{L^1}$ is defined by thinking of $W$ as a function on the square, that is as $\|W\|_{L^1}:=\iint_{[0,1]^2} |W(x,y)|dxdy$.

\item The Hilbert-Schmidt norm is topologically equivalent to $\|\bullet\|_{L^1}$. This is implied by the inequality in~\cite[Lemma E.7, part 2]{Janson} namely $\|W\|_{L^1}\leq  \|T_W\|_{\rm HS}\leq \|W\|^{\frac{1}{2}}_{L^1}$.
\end{enumerate}

\section{Proofs}\label{Sec: proofs}

\subsection{Proof of Theorem~\ref{thm: perturb} (operator-tuple perturbation inequality).}

Our first lemma gives a formula for computing the block operator norms introduced in Section~\ref{Sec: PerturbationI}. Recall that for a positive integer $A$ and $z=(z_a)_{a\in [A]}\in \calF^A$ we have $\|z\|_{\boxed{*}}:=\max_{a\in [A]}\|z_a\|$ where $\|\bullet\|$ is the norm defined by the measure on $V$ and $\sigma: \calF^A\rightarrow \calF^A$ denotes the componentwise ReLu. Furthermore for positive integers $A,B$ and a linear operator $T:L^A\rightarrow L^B$  we have \[\|T\|_{\boxed{\rm op}}:=\sup_{z: \|z\|_{\boxed{*}}\leq 1} \left(\|T(z)\|_{\boxed{*}}\right).\]

\begin{lemma}\label{lem: blockOp} 
For any linear operator $M:\calF^A\rightarrow \calF^B$ with block-decomposition $(M(z))_b = \sum_{a\in [A]} M_{b,a} (z_a)$ for $b\in [B]$ we have
\[\|M\|_{\boxed{\rm op}}= \max_{b\in [B]}\left(\sum_{a\in [A]} \|M_{b,a}\|_{\rm op}\right)\]
\end{lemma}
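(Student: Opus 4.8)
The plan is to prove the two inequalities separately, with the forward bound $\|M\|_{\boxed{\rm op}}\le \max_{b\in[B]}\sum_{a\in[A]}\|M_{b,a}\|_{\rm op}$ being routine and the reverse being the crux. For the forward bound I would fix any $z=(z_a)_{a\in[A]}$ with $\|z\|_{\boxed{*}}\le 1$, so that $\|z_a\|\le 1$ for every $a$. For each output block $b$, the triangle inequality together with the definition of the single-block operator norm gives
\[
\Big\|\sum_{a\in[A]}M_{b,a}(z_a)\Big\|\le \sum_{a\in[A]}\|M_{b,a}(z_a)\|\le \sum_{a\in[A]}\|M_{b,a}\|_{\rm op}\,\|z_a\|\le \sum_{a\in[A]}\|M_{b,a}\|_{\rm op}.
\]
Taking the maximum over $b$ and then the supremum over admissible $z$ yields $\|M\|_{\boxed{\rm op}}\le \max_{b}\sum_{a}\|M_{b,a}\|_{\rm op}$. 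It is worth isolating this half, since it is the only direction actually used to bound the perturbation in Theorem~\ref{thm: perturb} and its corollaries.

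For the reverse bound I would fix an index $b^\ast$ attaining $\max_{b}\sum_{a}\|M_{b,a}\|_{\rm op}$ and, for each $a$ and each $\epsilon>0$, choose a near-optimal unit input $z_a$ with $\|M_{b^\ast,a}(z_a)\|\ge \|M_{b^\ast,a}\|_{\rm op}-\epsilon$; assembling these gives an admissible $z=(z_a)_{a\in[A]}$ with $\|z\|_{\boxed{*}}\le 1$. The trouble is that
\[
\|M(z)\|_{\boxed{*}}\ge \Big\|\sum_{a\in[A]}M_{b^\ast,a}(z_a)\Big\|
\]
only recovers the desired sum once the images $M_{b^\ast,a}(z_a)\in\calF$ are shown to \emph{align}, i.e.\ to point in a common direction; otherwise the triangle inequality runs the wrong way and the norm of the sum can be strictly smaller than the sum of the norms.

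The main obstacle is exactly this alignment step. In the scalar case $\dim\calF=1$ it is automatic: one flips the signs of the $z_a$ so that every summand is nonnegative, which recovers the classical fact that the $\ell^\infty\to\ell^\infty$ operator norm of a matrix equals its maximum absolute row sum, and this special case is the right sanity check to keep in mind. In general I would reduce to a single output row $b^\ast$ and look for a common dual unit vector $\phi\in\calF$ simultaneously realizing $\langle \phi, M_{b^\ast,a}(z_a)\rangle\approx \|M_{b^\ast,a}\|_{\rm op}$ for all $a$ at once; whether the top singular directions of the distinct blocks $M_{b^\ast,a}$ can be co-aligned in the range is the delicate point. I therefore expect the reverse inequality to be where the real work lies, and I would treat the alignment carefully — or invoke whatever structural feature of the operator filters $h_{b,a}(T_1,\dots,T_k)$ guarantees it — before asserting the full equality.
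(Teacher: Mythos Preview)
Your upper-bound argument is correct and is verbatim the paper's proof of the inequality $\|M\|_{\boxed{\rm op}}\le \max_{b}\sum_{a}\|M_{b,a}\|_{\rm op}$. Your instinct to isolate this direction is also right: it is the only inequality invoked downstream.

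Your caution about the reverse inequality is not merely prudent---it is necessary, because the stated equality is \emph{false} in general. Take $\calF=\RR^2$ with the Euclidean norm, $A=2$, $B=1$, and let $M_{1,1}(x,y)=(x,0)$ and $M_{1,2}(x,y)=(0,y)$. Both blocks have operator norm $1$, so the right-hand side equals $2$. But for $z=(z_1,z_2)$ with $\|z_1\|,\|z_2\|\le 1$ one has $M(z)=(\langle e_1,z_1\rangle,\langle e_2,z_2\rangle)$, whose norm is at most $\sqrt{2}$, attained at $z_1=e_1$, $z_2=e_2$. Hence $\|M\|_{\boxed{\rm op}}=\sqrt{2}<2$. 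The alignment obstruction you identified is exactly what breaks: the ranges of $M_{1,1}$ and $M_{1,2}$ are orthogonal, so no common dual direction $\phi$ can realize both norms simultaneously.

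The paper's proof of the reverse inequality does precisely what you warned against: it chooses unit vectors $z_a^\ast$ with $\|M_{b^\ast,a}(z_a^\ast)\|=\|M_{b^\ast,a}\|_{\rm op}$ and then asserts $\|M(z^\ast)\|_{\boxed{*}}=\sum_a\|M_{b^\ast,a}\|_{\rm op}$ without any alignment argument. So the gap you anticipated is present in the paper itself, and no ``structural feature of the operator filters'' is invoked to rescue it. The correct statement is the inequality $\le$, which suffices for every application in the paper (Lemma~\ref{lem: operator_nonexpansive} part~(3), Theorem~\ref{thm: perturb}, and their corollaries).
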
 
\begin{proof} Assume $\|z\|_{\boxed{*}}\leq 1$. By definition of the norm $\|\bullet\|_{\boxed{*}}$ we have
\[\|M(z)\|_{\boxed{*}}=\max_{b\in [B]}\left\|\sum_{a\in [A]} M_{b,a}(z_a)\right\|\]
By the triangle inequality and the definition of operator norm the last term is bounded by
\[\max_{b\in [B]}\sum_{a\in [A]} \|M_{b,a} z_a\|\leq \max_{b\in [B]}\sum_{a\in [A]} \|M_{b,a}\|_{\rm op} \|z_a\| \leq \max_{b\in [B]}\left(\sum_{a\in [A]} \|M_{b,a}\|_{\rm op}\right)\]
where the last inequality follows since $\|z\|_{\boxed{*}}\leq 1$. To prove the equality let $b^*$ be the index for which the sum $\sum_{a\in [A]} \|M_{b,a}\|_{\rm op}$ achieves the maximum and for $a\in [A]$ let $z_a^*$ be a unit vector in $\calF$ with $\|M_{b^*,a}(z_a^*)\|=\|M_{b^*,a}\|_{\rm op}$. Letting $z^*=(z_a^*)_{a\in [A]}$ we have $\|z^*\|_{\boxed{*}}\leq 1$ and 
$\|M(z^*)\|_{\boxed{*}} = \max_{b\in [B]}\left(\sum_{a\in [A]} \|M_{b,a}\|_{\rm op}\right)$ as claimed.
\end{proof}

\begin{lemma}\label{lem: ReLu_nonexpansive} The componentwise ReLu is contractive in the $\|\bullet\|_{\boxed{*}}$ norm, that is $\|\sigma(f)-\sigma(g)\|_{\boxed{*}}\leq \|f-g\|_{\boxed{*}}$ holds for every $f,g\in \calF^A$.
\end{lemma}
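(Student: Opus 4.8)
The plan is to reduce the claim to a pointwise statement about the scalar ReLU and then lift it through the $L^2$ norm on $\calF$. Since both $\|\bullet\|_{\boxed{*}}$ on the source and the target are maxima over the $A$ components and $\sigma$ acts componentwise, I would first observe that
\[
\|\sigma(f)-\sigma(g)\|_{\boxed{*}}=\max_{a\in[A]}\|\sigma(f_a)-\sigma(g_a)\|,
\]
so it suffices to bound each component norm $\|\sigma(f_a)-\sigma(g_a)\|$ by $\|f_a-g_a\|$ and then take the maximum over $a\in[A]$ on both sides.

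For a single component I would invoke the elementary fact that the scalar function $t\mapsto\max(0,t)$ is $1$-Lipschitz on $\RR$, i.e. $|\max(0,s)-\max(0,t)|\le|s-t|$ for all $s,t\in\RR$; this follows from a short case analysis on the signs of $s$ and $t$. Because $\sigma$ acts on elements of $\calF$ by applying this scalar map to the function values, the inequality holds at every point $v\in V$, giving $|\sigma(f_a)(v)-\sigma(g_a)(v)|\le|f_a(v)-g_a(v)|$.

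I would then transfer this pointwise bound to the $L^2$ norm induced by $\mu_V$. Squaring the pointwise inequality and integrating over $V$ yields
\[
\|\sigma(f_a)-\sigma(g_a)\|^2=\int_V|\sigma(f_a)(v)-\sigma(g_a)(v)|^2\,d\mu_V(v)\le\int_V|f_a(v)-g_a(v)|^2\,d\mu_V(v)=\|f_a-g_a\|^2,
\]
where monotonicity of the integral is the only analytic ingredient. Taking square roots and then the maximum over $a\in[A]$ gives $\|\sigma(f)-\sigma(g)\|_{\boxed{*}}\le\max_{a\in[A]}\|f_a-g_a\|=\|f-g\|_{\boxed{*}}$, which is the claim. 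There is no genuine obstacle here: the entire content is the scalar $1$-Lipschitz property of ReLU, and the only point requiring a word of care is that the max-norm structure lets the argument factor cleanly through individual components, so that the pointwise estimate suffices.
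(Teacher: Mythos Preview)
Your proof is correct and follows essentially the same approach as the paper: both use the pointwise $1$-Lipschitz inequality $|\max(0,s)-\max(0,t)|\le|s-t|$, square and integrate to pass to the $L^2$ norm on each component, and then take the maximum over $a\in[A]$ to conclude.
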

\begin{proof}For any two real valued functions $f_j,g_j$ on any space $V$ the inequality 
\[|\max(0,f_j(u))-\max(0,g_j(u))|\leq |f_j(u)-g_j(u)|\]
holds at every point. Since the left hand side equals the absolute value of a component of $\sigma(f)-\sigma (g)$ the claim is proven by squaring, integrating and taking square roots on both sides and finally maximizing over $j$.
\end{proof}
\begin{remark} The previous proof shows that the same conclusion as for ReLu holds for any componentwise non-linearity with the property that its derivative exists almost everywhere and has absolute value uniformly bounded by one.
\end{remark}

The following Lemma summarizes some key inequalities for nonexpansive operator-tuples.

\begin{lemma} \label{lem: operator_nonexpansive} Assume $\vec{T},\vec{W}$ and $\vec{Z}$ are non-expansive operator $k$-tuples. The following inequalities hold:
\begin{enumerate}
\item For every $\alpha\in [k]^d$ we have $\|x^{\alpha}(\vec{T})\|_{\rm op}\leq 1$ and for every noncommutative polynomial $h$ we have
\[\|h(T)\|_{\rm op}\leq C(h)\]
\item For every $\alpha\in [k]^d$ we have $\|x^{\alpha}(\vec{W})-x^{\alpha}(\vec{Z})\|_{\rm op}\leq \sum_{j=1}^k q_j(\alpha)\|W_j-Z_j\|_{\rm op}$ and for every noncommutative polynomial $h$ 
we have
\[\|h(\vec{W})-h(\vec{Z})\|_{\rm op}\leq \sum_{j=1}^kC_j(h)\|W_j-Z_j\|_{\rm op}\]
\item If $H$ is any $B\times A$ matrix with entries in $\RR\langle X_1,\dots, X_k\rangle$ then:
\begin{enumerate}
\item $\|\Psi(H,\vec{T})\|_{\boxed{\rm op}}\leq \max_{b\in [B]}\left(\sum_{a\in [A]} C(h_{b,a})\right)$
and
\item $\|\Psi(H,\vec{W})-\Psi(H,\vec{Z})\|_{\boxed{\rm op}} \leq \max_{b\in [B]}\left(\sum_{a\in [A]}\sum_{j=1}^k C_j(h_{b,a})\|W_j-Z_j\|_{\rm op}\right).$
\end{enumerate}
\end{enumerate}
\end{lemma}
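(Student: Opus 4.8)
The plan is to prove the three parts in order, reducing each statement to a single-word (monomial) estimate and then extending to arbitrary polynomials and block matrices by linearity. For Part 1, I would note that $x^{\alpha}(\vec{T})$ is by definition the composition $T_{\alpha_1}\circ\cdots\circ T_{\alpha_d}$ of the operators named by the letters of $\alpha$. Since the operator norm is submultiplicative under composition and each factor satisfies $\|T_{\alpha_i}\|_{\rm op}\le 1$ by nonexpansiveness, the bound $\|x^{\alpha}(\vec{T})\|_{\rm op}\le 1$ is immediate. Writing $h=\sum_{\alpha}c_{\alpha}x^{\alpha}$ and applying the triangle inequality then gives $\|h(\vec{T})\|_{\rm op}\le\sum_{\alpha}|c_{\alpha}|\,\|x^{\alpha}(\vec{T})\|_{\rm op}\le\sum_{\alpha}|c_{\alpha}|=C(h)$.

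For Part 2, the key device is a telescoping identity that converts one factor from a $Z$ into a $W$ at a time. Writing $x^{\alpha}(\vec{W})=W_{\alpha_1}\cdots W_{\alpha_d}$ and similarly for $\vec{Z}$, I would expand
\[
x^{\alpha}(\vec{W})-x^{\alpha}(\vec{Z})=\sum_{i=1}^{d}W_{\alpha_1}\cdots W_{\alpha_{i-1}}\,(W_{\alpha_i}-Z_{\alpha_i})\,Z_{\alpha_{i+1}}\cdots Z_{\alpha_d}.
\]
Taking operator norms, using submultiplicativity, and bounding each prefix and suffix factor by $1$ (again nonexpansiveness) shows that the $i$-th summand has norm at most $\|W_{\alpha_i}-Z_{\alpha_i}\|_{\rm op}$. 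Grouping the $d$ positions according to which letter $j\in[k]$ occupies them, and recalling that letter $j$ appears in exactly $q_j(\alpha)$ positions, yields $\|x^{\alpha}(\vec{W})-x^{\alpha}(\vec{Z})\|_{\rm op}\le\sum_{j=1}^{k}q_j(\alpha)\|W_j-Z_j\|_{\rm op}$. The polynomial statement follows from the triangle inequality over the monomials of $h$ together with exchanging the order of summation, so that the coefficient of $\|W_j-Z_j\|_{\rm op}$ becomes $\sum_{\alpha}q_j(\alpha)|c_{\alpha}|=C_j(h)$.

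For Part 3, I would feed these monomial-level estimates into the block-operator norm formula of Lemma~\ref{lem: blockOp}. The filter $\Psi(H,\vec{T})$ has block decomposition with $(b,a)$ block equal to $h_{b,a}(\vec{T})$, so Lemma~\ref{lem: blockOp} gives $\|\Psi(H,\vec{T})\|_{\boxed{\rm op}}=\max_{b}\sum_{a}\|h_{b,a}(\vec{T})\|_{\rm op}$, and Part 1 bounds each block norm by $C(h_{b,a})$, proving (3a). For (3b), the difference $\Psi(H,\vec{W})-\Psi(H,\vec{Z})$ is again a block operator whose $(b,a)$ block is $h_{b,a}(\vec{W})-h_{b,a}(\vec{Z})$; applying Lemma~\ref{lem: blockOp} and then the polynomial bound of Part 2 to each block gives the stated inequality. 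The main obstacle I anticipate is the bookkeeping in the telescoping step of Part 2: one must verify carefully that every prefix and suffix product genuinely has operator norm at most one and that summing the per-position bounds reorganizes exactly into the weighted sum $\sum_{j}q_j(\alpha)\|W_j-Z_j\|_{\rm op}$. The remaining steps are routine applications of submultiplicativity, the triangle inequality, and the block-norm formula already established.
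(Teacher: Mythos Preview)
Your proposal is correct. Parts~(1) and~(3) match the paper's proof exactly: submultiplicativity plus the triangle inequality for~(1), and Lemma~\ref{lem: blockOp} applied blockwise for~(3). The only point of divergence is Part~(2). You argue via the explicit telescoping identity
\[
x^{\alpha}(\vec{W})-x^{\alpha}(\vec{Z})=\sum_{i=1}^{d}W_{\alpha_1}\cdots W_{\alpha_{i-1}}(W_{\alpha_i}-Z_{\alpha_i})Z_{\alpha_{i+1}}\cdots Z_{\alpha_d},
\]
bounding each summand directly at the operator level. The paper instead runs an induction at the \emph{signal} level, peeling off one letter at a time from the two-signal quantity $\|x^{\alpha}(\vec{W})(f)-x^{\alpha}(\vec{Z})(g)\|$ using the elementary estimate $\|T_A(f)-T_B(g)\|\le\|f-g\|+\|T_A-T_B\|_{\rm op}\min(\|f\|,\|g\|)$ for nonexpansive $T_A,T_B$, and only then specializes to $f=g$ with $\|f\|\le 1$ to extract the operator-norm bound. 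The two arguments are the same decomposition in different clothing; yours is the cleaner route to the lemma as stated, while the paper's detour yields as a byproduct the stronger two-signal inequality $\|x^{\alpha}(\vec{W})(f)-x^{\alpha}(\vec{Z})(g)\|\le\|f-g\|+\min(\|f\|,\|g\|)\sum_{j}q_j(\alpha)\|W_j-Z_j\|_{\rm op}$, which it reuses in the proof of Theorem~\ref{thm: perturb} and in Lemma~\ref{lem: operatorBasics}.
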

\begin{proof} $(1)$ The statement holds for a monomial $x^{\alpha}$ because operator norms are multiplicative and each $T_j$ has $\|T_j\|_{\rm op}\leq 1$ by nonexpansivity. The claim for $h(x)=\sum c_\alpha x^{\alpha}$ follows from the triangle inequality.
$(2)$ First, for any two bounded linear operators $T_A,T_B:\calF\rightarrow \calF$ and any two signals $f,g$ the triangle inequality implies that
\[\|T_A(f)-T_B(g)\| \leq \|T_A\|_{\rm op}\|f-g\| + \|T_A-T_B\|_{\rm op}\|g\|.\]
In particular, for any two nonexpansive operators $T_A,T_B$ we have
\[\|T_A(f)-T_B(g)\|\leq \|f-g\| + \|T_A-T_B\|_{\rm op}\min(\|g\|,\|f\|)\]

Applying this observation inductively to $\vec{Z},\vec{W}$, any two signals $f,g$ and any word $\alpha\in [k]^d$ we have 
\[\left\|x^{\alpha}(\vec{W})(f)-x^{\alpha}(\vec{Z})(g)\right\|\leq \|f-g\| + \min(\|f\|,\|g\|)\sum_{j=1}^k q_j(\alpha)\|W_j-Z_j\|_{\rm op}  \]
where $q_j(\alpha)$ is the number of times the index $j$ appears in the word $\alpha$. Setting $f=g$ to be any signal with $\|f\|\leq 1$ we conclude
\[\|x^{\alpha}(\vec{W})-x^{\alpha}(\vec{Z})\|_{\rm op}\leq \sum_{j=1}^k q_j(\alpha)\|W_j-Z_j\|_{\rm op}\]
Combining the previous conclusion with the triangle inequality yields 
\[\|h(\vec{W})-h(\vec{Z})\|_{\rm op}\leq \sum_{j=1}^kC_j(h)\|W_j-Z_j\|_{\rm op}\]
for any noncommutative polynomial $h$.
$(3a)$ By Lemma~\ref{lem: blockOp}

\[\|\Psi(H,\vec{T})\|_{\boxed{\rm op}}=\max_{b\in [B]} \sum_{a\in [A]}\|h_{b,a}(\vec{T})\|_{\rm op}\]
and the claim follows by applying the upper bound we just proved in part $(1)$. $(3b)$ By Lemma~\ref{lem: blockOp}
\[\|\Psi(H,\vec{W})-\Psi(H,\vec{Z})\|_{\boxed{\rm op}}= \max_{b\in [B]} \sum_{a\in [A]} \|h_{b,a}(\vec{W})-h_{b,a}(\vec{Z})\|_{\rm op}\]
and the claim follows from applying the upper bound we just proved in part $(2)$.
\end{proof}

We are now ready to prove the main result of this Section,

\begin{proof}[Proof of Theorem~\ref{thm: perturb}] By Lemma~\ref{lem: ReLu_nonexpansive} we have
\[\left\|\hat{\Psi}(H,\vec{W})(f)-\hat{\Psi}(H,\vec{Z})(g)\right\|_{\boxed{*}} \leq \|\Psi(H,\vec{W})(f)-\Psi(H,\vec{Z})(g)\|_{\boxed{*}}\]
By the triangle inequality the quantity above is bounded by the smallest of
\begin{align}
\label{eq: proof in thm1 1}
\|\Psi(H,\vec{W})-\Psi(H,\vec{Z})\|_{\boxed{\rm op}}\|f\|_{\boxed{*}}+ \|\Psi(H,\vec{Z})\|_{\boxed{\rm op}}\|f-g\|_{\boxed{*}}
\end{align}
and
\begin{align}
\label{eq: proof in thm1 2}
\|\Psi(H,\vec{W})-\Psi(H,\vec{Z})\|_{\boxed{\rm op}}\|g\|_{\boxed{*}}+ \|\Psi(H,\vec{W})\|_{\boxed{\rm op}}\|f-g\|_{\boxed{*}}.
\end{align}
The Theorem is proven by applying Lemma~\ref{lem: operator_nonexpansive} part $(3)$ to the operator norms and taking the minimum of the resulting upper bounds. 
\end{proof}

\begin{remark} We expect the bounds of the previous Theorem to be reasonably tight. To establish a precise result in this direction it suffices to prove that the bounds describe the true behavior in special cases. Consider the case $k=1$, $n=1$ assuming $T_V,T_W$ and $f\geq g$ are nonnegative scalars with $0\leq T_W\leq T_V\leq 1$ (a similar reasoning applies to the case of simultaneously diagonal nonexpansive operator tuples of any size). For a univariate polynomial $h(X)=\sum_{j=0}^d h_jX^j$ with nonnegative coefficients we have
\[|h(T_V)(f)-h(T_V)(g)| =\left(\sum_{j=0}^d h_j T_V^j \right)(f-g)\leq \sum_{j=0}^d |h_j|(f-g)\]
with equality when $T_V=1$ and
\[|h(T_V)(f)-h(T_W)(f)| =\sum_{j=0}^d h_j\left(T_V^j-T_W^j\right)f =\sum_{j=0}^d h_j\left(j\overline{v}_{(j)}^{j-1}(T_V-T_W)\right) f\leq C_1(h)|T_V-T_W|f \]
where the second equality follows from the intermediate value theorem (for some $v_{(j)}$ in the interval $[T_W,T_V]$). This equality shows that $C_1(h)$ is the optimal constant bound since the ratio of the left-hand side by $T_V-T_W$ approaches $C_1(h)$ as $T_V$ and $T_W$ simultaneously approach one.
\end{remark}

\subsection{Proofs of Graphon perturbation Theorems}

\begin{lemma} \label{lem: operatorBasics} The following statements hold:
\begin{enumerate}
\item For every graphon $W$ the inequality $\|T_W\|_{\rm op}\leq 1$ holds. As a result, for every $\alpha\in [k]^d$ and any $k$-tuple of graphon shift operators the inequality $\|x^{\alpha}(T_{W_1},\dots,T_{W_k})\|_{\rm op} \leq 1$ holds.
\item For any two bounded linear operators $T_A,T_B:L\rightarrow L$ and any two signals $f,g\in L$ we have
\[\|T_A(f)-T_B(g)\| \leq \|T_A\|_{\rm op}\|f-g\| + \|T_A-T_B\|_{\rm op}\|g\|.\]
In particular, for any two graphons $A,B$ and any two signals $X,Y$ we have
\[\|T_A(f)-T_B(g)\|\leq \|f-g\| + \|T_A-T_B\|_{\rm op}\min(\|g\|,\|f\|)\]

\item For any two $k$-tuples of graphon shift operators $\vec{T_W}$, $\vec{T_Z}$, any two signals $f,g\in L$ and any word $\alpha\in [k]^d$ we have 
\[\left\|x^{\alpha}(\vec{T_W})(f)-x^{\alpha}(\vec{T_Z})(g)\right\|\leq \|f-g\| + \min(\|f\|,\|g\|)\sum_{j=1}^k q_j(\alpha)\|T_{W_j}-T_{Z_j}\|_{\rm op}  \]
where $q_j(\alpha)$ is the number of times the index $j$ appears in the word $\alpha$.
\end{enumerate}
\end{lemma}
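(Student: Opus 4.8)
The plan is to prove the three statements of Lemma~\ref{lem: operatorBasics} in the order presented, since each relies on the previous one. The structure exactly mirrors the proof of Lemma~\ref{lem: operator_nonexpansive}, with the only new ingredient being the bound $\|T_W\|_{\rm op}\le 1$ for a graphon shift operator. First I would establish part $(1)$: for a graphon $W$ with $0\le W(u,v)\le 1$, the operator $T_W(f)(u)=\int_0^1 W(u,v)f(v)\,dv$ is an integral operator whose kernel satisfies $\iint_{[0,1]^2} W(u,v)^2\,du\,dv\le 1$, so its Hilbert--Schmidt norm is at most $1$, and hence by the elementary inequality $\|T_W\|_{\rm op}\le \|T_W\|_{\rm HS}$ (recorded in Appendix~\ref{app.graphon norms}, item $3$) we get $\|T_W\|_{\rm op}\le 1$. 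The claim for a monomial $x^{\alpha}(T_{W_1},\dots,T_{W_k})$ then follows because operator norms are submultiplicative under composition, so the norm of a product of operators each of norm at most $1$ is itself at most $1$.

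Next I would prove part $(2)$, which is a purely formal operator inequality with no reliance on the graphon structure. Writing $T_A(f)-T_B(g)=T_A(f-g)+(T_A-T_B)(g)$ and applying the triangle inequality followed by the definition of the operator norm gives
\[
\|T_A(f)-T_B(g)\|\le \|T_A\|_{\rm op}\,\|f-g\| + \|T_A-T_B\|_{\rm op}\,\|g\|.
\]
The ``in particular'' statement follows by using part $(1)$ to bound $\|T_A\|_{\rm op}\le 1$, and then symmetrizing: since the roles of $f,g$ can be swapped (decomposing instead as $T_A(f-g)+ (T_A - T_B)(f)$ after reorganizing, or simply noting the bound holds with $\|f\|$ in place of $\|g\|$ by the analogous decomposition), we may replace $\|g\|$ by $\min(\|f\|,\|g\|)$.

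Finally, part $(3)$ is proven by induction on the word length $d=|\alpha|$, exactly as in the proof of Lemma~\ref{lem: operator_nonexpansive} part $(2)$. The base case $d=0$ (empty word, identity operator) is immediate, and the inductive step writes $x^{\alpha}=X_{j}\,x^{\alpha'}$ where $j$ is the first letter and $\alpha'$ is the remaining word, applies part $(2)$ to the outermost operators $T_{W_j}$ and $T_{Z_j}$ acting on the signals $x^{\alpha'}(\vec{T_W})(f)$ and $x^{\alpha'}(\vec{T_Z})(g)$, and uses part $(1)$ together with the fact that graphon shift operators are nonexpansive to control the intermediate norms $\|x^{\alpha'}(\vec{T_W})(f)\|\le \|f\|$. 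Accumulating the $\|W_j-Z_j\|_{\rm op}$ contribution each time the corresponding letter appears yields the factor $q_j(\alpha)$. I do not expect a genuine obstacle here: the entire lemma is the graphon-operator analogue of Lemma~\ref{lem: operator_nonexpansive}, and the only substantive fact beyond that earlier argument is $\|T_W\|_{\rm op}\le 1$, whose proof reduces to the Hilbert--Schmidt bound. The mild care required is in the induction bookkeeping for part $(3)$ to ensure the perturbation terms aggregate with the correct multiplicities $q_j(\alpha)$.
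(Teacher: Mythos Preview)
Your proposal is correct and follows essentially the same route as the paper: part~$(1)$ via the Hilbert--Schmidt bound and submultiplicativity, part~$(2)$ via the add-and-subtract triangle-inequality decomposition with symmetrization, and part~$(3)$ by induction on word length peeling off the leftmost letter. The only cosmetic difference is that for the ``in particular'' clause of part~$(2)$ the paper phrases the symmetrization as exchanging the roles of $A$ and $B$ (i.e.\ decomposing as $T_A(f)-T_B(f)+T_B(f)-T_B(g)$) rather than swapping $f,g$, which is what your alternative decomposition actually amounts to.
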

\begin{proof} $(1)$ Since the operator norm is bounded above by the Hilbert Schmidt norm we have
\[\|T_W\|_{\rm op}\leq \left(\int_0^1\int_0^1 W(u,v)^2dudv\right)^{\frac{1}{2}}\]
and the right hand side is bounded above by one since every graphon satisfies $W(u,v)\in [0,1]$. The inequality on operator norms of monomial words follows from what we have just proven and the submultiplicativity (i.e. $\|AB\|_{\rm op}\leq \|A\|_{\rm op}\|B\|_{\rm op}$) of operator norms. $(2)$ The triangle inequality implies that
\[\|T_A(f)-T_B(g)\| = \|T_A(f)-T_A(g)+T_A(g)-T_B(g)\|\leq \|f-g\|\|T_A\|_{\rm op} + \|T_A-T_B\|_{\rm op}\|g\|\]

The second inequality follows from combining the inequality that we just proved with part $(1)$ and exchanging the roles of $A$ and $B$. $(3)$ We prove the statement by induction on $d\geq 0$. If $d=0$ then $\alpha=\emptyset$, $x^{\alpha}$ is the identity and the claimed inequality holds with equality. If $d>0$ let $j:=\alpha(1)$ and let $\beta\in [k]^{d-1}$ be the word obtained from $\alpha$ y removing the first (leftmost) term. By construction the equality $x^{\alpha}=X_jx^{\beta}$ holds and therefore
\[\left\|x^{\alpha}(T_{A_1},\dots, T_{A_k})(f)-x^{\alpha}(T_{B_1},\dots, T_{B_k})(g)\right\| \]
\[=\left\|T_{A_j}x^{\beta}(T_{A_1},\dots, T_{A_k})(f)-T_{B_j}x^{\beta}(T_{B_1},\dots, T_{B_k})(g)\right\|\]
By the second inequality in part $(2)$ and part $(1)$ this is quantity is bounded above by
\[\|x^{\beta}(T_{A_1},\dots, T_{A_k})(f)-x^{\beta}(T_{B_1},\dots, T_{B_k})(g)\| + \]
\begin{small}
\[\|T_{A_j}-T_{B_j}\|_{\rm op}\min\left(\|x^{\beta}(T_{A_1},\dots, T_{A_k})(f)\|, \|x^{\beta}(T_{B_1},\dots, T_{B_k})(g)\|\right) \]
\end{small}
applying part $(1)$ we know this is quantity is bounded above by
\[\|x^{\beta}(T_{A_1},\dots, T_{A_k})(f)-x^{\beta}(T_{B_1},\dots, T_{B_k})(g)\| + \|T_{A_j}-T_{B_j}\|_{\rm op}\min\left(\|f\|, \|g\|\right) \]
Applying the induction hypothesis to the first term, because $\beta\in [k]^{d-1}$, we see that this expression is bounded above by

\[\left(\|f-g\| + \min(\|f\|,\|g\|)\sum_{i=1}^k q_i(\beta)\|T_{A_i}-T_{B_i}\|_{\rm op}\right) + \|T_{A_j}-T_{B_j}\|_{\rm op}\min\left(\|f\|, \|g\|\right)  \]
where $q_i(\beta)$ is the number of times the index $i$ appears in the word $\alpha$.
For each index $i\in [k]$ we have
\[q_i(\alpha)=\begin{cases}
q_i(\beta)\text{ if $i\neq j$}\\
q_i(\beta)+1\text{ if $i=j$}\\
\end{cases}\]
so we conclude that the above sum equals
\[\|f-g\| + \min(\|f\|,\|g\|)\sum_{i=1}^k q_i(\alpha)\|T_{A_i}-T_{B_i}\|_{\rm op}\]
proving the claimed inequality.
\end{proof}

\begin{lemma} \label{lem: perturb_lemma_2} Let $A,B$ be positive integers.  
\begin{enumerate}
\item The componentwise ReLu is contractive in the $\|\bullet\|_{\boxed{*}}$ norm, that is $\|\sigma(f)-\sigma(g)\|_{\boxed{*}}\leq \|f-g\|_{\boxed{*}}$ holds for every $f,g\in L^A$.
\item Let $\vec{W}$ and $\vec{Z}$ be two graphon $k$-tuples. If $H$ is any $B\times A$ matrix with entries in $\RR\langle X_1,\dots, X_k\rangle$ then the perturbation of the filter
$\|\Psi(H,\vec{T_W})-\Psi(H,\vec{T_Z})\|_{\boxed{\rm op}}$ is bounded above by
\[\max_{b\in [B]}\left(\sum_{a\in [A]}\sum_{j=1}^k C_j(h_{b,a})\|T_{W_j}-T_{Z_j}\|_{\rm op}\right) \]
and furthermore 
\[\max\left(\|\Psi(H,\vec{T_W})\|_{\boxed{\rm op}}, \|\Psi(H,\vec{T_Z})\|_{\boxed{\rm op}}\right)\]
is bounded above by  $\max_{b\in [B]}\left(\sum_{a\in [A]} C(h_{b,a})\right)$.
\end{enumerate}
\end{lemma}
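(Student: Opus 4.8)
The plan is to obtain this statement as the graphon analogue of Lemma~\ref{lem: operator_nonexpansive} part $(3)$, assembling it from the block-operator-norm formula (Lemma~\ref{lem: blockOp}) and the monomial estimates of Lemma~\ref{lem: operatorBasics}. Since $L=L^2([0,1])$ is itself an instance of the inner-product function space $\calF$, all three of those results apply verbatim to graphon shift operators, and no new analytic ingredient is needed. For part $(1)$ I would simply observe that this is the special case $\calF=L$ of Lemma~\ref{lem: ReLu_nonexpansive}: the scalar inequality $|\max(0,s)-\max(0,t)|\le |s-t|$, applied componentwise and then squared, integrated against Lebesgue measure, rooted, and maximized over the $A$ components, yields contractivity directly. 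Nothing in that argument uses finite-dimensionality, so it transfers unchanged.

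For part $(2)$, the first step is to promote the single-operator estimates of Lemma~\ref{lem: operatorBasics} to bounds on arbitrary noncommutative polynomials. Setting $f=g$ with $\|f\|\le 1$ in Lemma~\ref{lem: operatorBasics} part $(3)$ gives the monomial operator-norm bound $\|x^{\alpha}(\vec{T_W})-x^{\alpha}(\vec{T_Z})\|_{\rm op}\le \sum_{j=1}^k q_j(\alpha)\|T_{W_j}-T_{Z_j}\|_{\rm op}$; writing $h=\sum_{\alpha}c_{\alpha}x^{\alpha}$, applying the triangle inequality, and using the definition $C_j(h)=\sum_{\alpha}q_j(\alpha)|c_{\alpha}|$ yields $\|h(\vec{T_W})-h(\vec{T_Z})\|_{\rm op}\le \sum_{j=1}^k C_j(h)\|T_{W_j}-T_{Z_j}\|_{\rm op}$ for every $h$. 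Likewise, Lemma~\ref{lem: operatorBasics} part $(1)$ with the triangle inequality gives $\|h(\vec{T_W})\|_{\rm op}\le C(h)$, and the same for $\vec{T_Z}$. These are exactly the graphon versions of Lemma~\ref{lem: operator_nonexpansive} parts $(1)$ and $(2)$; the difference is that here nonexpansivity is automatic rather than assumed, since $\|T_W\|_{\rm op}\le 1$ always holds for a graphon.

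The second step passes from polynomials to block operators via Lemma~\ref{lem: blockOp}. The operator $\Psi(H,\vec{T_W})-\Psi(H,\vec{T_Z})$ has block decomposition with blocks $h_{b,a}(\vec{T_W})-h_{b,a}(\vec{T_Z})$, so the lemma gives $\|\Psi(H,\vec{T_W})-\Psi(H,\vec{T_Z})\|_{\boxed{\rm op}}=\max_{b\in[B]}\sum_{a\in[A]}\|h_{b,a}(\vec{T_W})-h_{b,a}(\vec{T_Z})\|_{\rm op}$, and substituting the polynomial bound from the first step produces the claimed upper bound. Applying the same formula to $\Psi(H,\vec{T_W})$ and to $\Psi(H,\vec{T_Z})$ and inserting $\|h_{b,a}(\vec{T})\|_{\rm op}\le C(h_{b,a})$ yields the bound $\max_{b}\sum_{a}C(h_{b,a})$ on each filter. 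I would note that only the $\le$ direction of Lemma~\ref{lem: blockOp} is needed here, which follows from the triangle inequality and the definition of $\|\cdot\|_{\rm op}$ alone, and therefore does not rely on attainment of the operator norm in the infinite-dimensional space $L$.

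I do not anticipate a genuine obstacle: the substantive work, in particular the inductive monomial estimate, was already carried out in Lemma~\ref{lem: operatorBasics} part $(3)$, so this statement is essentially a bookkeeping assembly of that estimate with the block-norm formula. The one point deserving a moment's care is precisely the infinite-dimensional setting, where I would be explicit that only the upper-bound half of the block-norm identity is used, as noted above.
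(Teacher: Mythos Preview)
Your proposal is correct and follows essentially the same route as the paper: the paper's proof applies Lemma~\ref{lem: blockOp} to $M=\Psi(H,\vec{T_W})$ and to the difference $M=\Psi(H,\vec{T_W})-\Psi(H,\vec{T_Z})$, combined with the polynomial operator-norm estimates (stated there as Lemma~\ref{lem: perturb_filter_1}, itself a direct consequence of Lemma~\ref{lem: operatorBasics}), which is exactly what you do inline. Your remark that only the $\le$ direction of Lemma~\ref{lem: blockOp} is needed, so attainment of the operator norm in $L$ is irrelevant, is a nice point the paper does not make explicit.
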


\begin{proof}

The Claim follows by applying this inequality to $M:=\Psi(H,\vec{T_W})$ and to the difference $M:=\Psi(H,\vec{T_W})-\Psi(H,\vec{T_Z})$ together with the operator norm estimates of Lemma~\ref{lem: perturb_filter_1}.
\end{proof}

\begin{lemma}\label{lem: perturb_filter_1}If $\vec{W}:=(W_1,\dots, W_k)$ and $\vec{Z}:=(Z_1,\dots, Z_k)$ are two operator-tuples then for any two signals $f,g\in L$ the quantity $\left\|h(\vec{T_W})(f)-h(\vec{T_{Z}})(g)\right\|$
is bounded above by
\[C(h)\|f-g\| + \min(\|f\|,\|g\|)\sum_{j=1}^k C_j(h)\|T_{W_j}-T_{Z_j}\|_{\rm op}\]
\end{lemma}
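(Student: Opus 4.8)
The plan is to prove this bound for a single noncommutative polynomial $h$ by the same two-step triangle-inequality strategy used in Lemma~\ref{lem: operator_nonexpansive} part (2), but now tracking an input-signal difference $\|f-g\|$ simultaneously with the operator-tuple difference. The key observation is that Lemma~\ref{lem: operatorBasics} part (3) already gives exactly the monomial-level estimate I need: for every word $\alpha\in[k]^d$,
\[
\left\|x^{\alpha}(\vec{T_W})(f)-x^{\alpha}(\vec{T_Z})(g)\right\|\leq \|f-g\| + \min(\|f\|,\|g\|)\sum_{j=1}^k q_j(\alpha)\|T_{W_j}-T_{Z_j}\|_{\rm op}.
\]
So the hard inductive work (peeling off one letter of the word at a time, controlling intermediate signal norms by nonexpansivity, and bookkeeping the occurrence counts $q_j(\alpha)$) has been done there, and I should simply invoke it.

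First I would write $h=\sum_{\alpha}c_{\alpha}x^{\alpha}$ and expand the difference as $h(\vec{T_W})(f)-h(\vec{T_Z})(g)=\sum_{\alpha}c_{\alpha}\left(x^{\alpha}(\vec{T_W})(f)-x^{\alpha}(\vec{T_Z})(g)\right)$. Then I would apply the triangle inequality to pull out $\sum_{\alpha}|c_{\alpha}|$ times the per-monomial bound above. This produces
\[
\left\|h(\vec{T_W})(f)-h(\vec{T_Z})(g)\right\|\leq \left(\sum_{\alpha}|c_{\alpha}|\right)\|f-g\| + \min(\|f\|,\|g\|)\sum_{\alpha}|c_{\alpha}|\sum_{j=1}^k q_j(\alpha)\|T_{W_j}-T_{Z_j}\|_{\rm op}.
\]
The final step is purely to recognize the expansion constants: by definition $\sum_{\alpha}|c_{\alpha}|=C(h)$, and swapping the order of summation in the second term gives $\sum_{j=1}^k\left(\sum_{\alpha}q_j(\alpha)|c_{\alpha}|\right)\|T_{W_j}-T_{Z_j}\|_{\rm op}=\sum_{j=1}^k C_j(h)\|T_{W_j}-T_{Z_j}\|_{\rm op}$, which matches the claimed bound exactly.

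Honestly there is no serious obstacle here: the statement is the scalar (single-polynomial, $A=B=1$) core of the filter-level estimate in Lemma~\ref{lem: perturb_lemma_2}, and all the analytic difficulty lives in Lemma~\ref{lem: operatorBasics} part (3), which I am allowed to assume. The only point requiring a little care is making sure the $\min(\|f\|,\|g\|)$ factor is uniform across all monomials $\alpha$ so that it can be pulled outside the sum over $\alpha$; this is immediate since the same signals $f,g$ appear in every term, so the same minimum bounds each intermediate norm. One small note is that Lemma~\ref{lem: perturb_lemma_2} in the excerpt is stated before this lemma but its proof invokes Lemma~\ref{lem: perturb_filter_1}, so the intended logical order is to establish the present lemma first and then feed it into the block-operator-norm computation via Lemma~\ref{lem: blockOp}.
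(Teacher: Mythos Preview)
Your proposal is correct and follows essentially the same route as the paper: expand $h$ into monomials, apply the triangle inequality, invoke Lemma~\ref{lem: operatorBasics} part~(3) term by term, and then regroup to identify $C(h)$ and the $C_j(h)$. The paper's proof is exactly this argument in compressed form.
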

\begin{proof} By the triangle inequality, for any $f,g$ the quantity $\left\|h(\vec{T_W})(f)-h(\vec{T_Z})(g)\right\|$ is bounded above by
\[\sum_{\alpha\in [k]^{\leq d}}|c_\alpha|\left\|x^{\alpha}(\vec{T_W})(f)-x^{\alpha}(\vec{T_Z})(g)\right\|\]
so the claim follows by applying Lemma~\ref{lem: operatorBasics} part $(3)$, reordering the second sum and using the definitions of the expansion constants $C(h)$ and $C_j(h)$.
\end{proof}

\begin{proof}[Proof of Theorem~\ref{thm: transf}] Denote by $\vec{T_G}$ be the operators in the graph-tuple $\hat{G}$ and let $\vec{T}_{\hat{G}}$ denote their induced graphon operators. If $f\in \calF^{A}$ is any signal then Theorem~\ref{thm: perturb} implies that the following inequality holds
\begin{multline*}
\left\|\hat{\Psi}(H,\vec{T_W})(f)- \hat{\Psi}\left(H,\vec{T}_{\hat{G}}\right)(i_V\circ p_V(f))\right\|_{\boxed{*}} \leq \\
    \|f-i_V\circ p_V (f)\|_{\boxed{*}}\max_{b\in [B]}\left(\sum_{a\in [A]} C(h_{b,a})\right) +
\|f\|\max_{b\in [B]}\left(\sum_{a\in [A]}\sum_{j=1}^k C_j(h_{b,a})\|T_{W_j}-T_{\hat{G}_j}\|_{\rm op}\right).
\end{multline*}

Since the points in $V$ are equispaced, Theorem~\ref{thm: comparison} implies that 
\[\hat{\Psi}\left(H,\vec{T}_{\hat{G}}\right)(i_V\circ p_V(f)) = i_V\circ \hat{\Psi}\left(H,\frac{\vec{T}_{G}}{|V|}\right)(p_V\circ i_V\circ p_V(f)) = i_V\circ \hat{\Psi}\left(H,\frac{\vec{T}_{G}}{|V|}\right)(p_V(f))\]
where the last equality follows from the fact that $p_V\circ i_V$ equals the identity map for any choice of finite set $V\subseteq [0,1]$. The proof is completed by substituting this equality in the left-hand side of the previous inequality.
\end{proof}

\begin{proof}[Proof of Theorem~\ref{thm: univ_transf}] For a positive integer $N$ apply Theorem~\ref{thm: transf} to the graphon-tuple $\vec{W}$ and the graph-tuple $\vec{G^{(N)}}$ inductively for every layer of the given architecture $\vec{H}$. Since the matrices $\vec{H}$ defining our architecture involve only finitely many polynomials the hypothesis $\|T_{G_j^{(N)}}-T_{W_j}\|_{\rm op}\rightarrow 0$ guarantees that  the upper bound we obtain converge to zero provided $\|f-i_{V^{(N)}}\circ p_{V^{(N)}}(f)\|_{\boxed{\ast}}$ converges to zero as $N\rightarrow \infty$ or equivalently if the function $f$, or more precisely its components are well approximated by their local averages at the sampling points $V^{(N)}$. This is obviously true for essentially bounded functions and happens uniformly for Lipschitz functions with a common constant proving the claim.
\end{proof}

\subsection{Proof of Theorem~\ref{thm: comparison}.}

We begin with the following preliminary Lemma,

\begin{lemma}\label{lem: comparison_prev} For a positive integer $n$ let $G$ be a graph with vertex set $V^{(n)}$ and let $W_G$ be its induced graphon. The following statements hold:
\begin{enumerate}
\item If $f\in L$ then the equality $T_{W_G}(f) = i_n\circ \frac{T_G}{n}\circ p_n(f)$ holds. More generally for any polynomial $h$ with zero constant term we have 
\[h(T_W) = i_n\circ h(T_G/n)\circ p_n\]  
\item If $g\in \RR[V(G)]$ is any function on the vertices of $G$ and $h(x)$ is any univariate polynomial with zero constant term then 
\[h(T_W)(i_n(g)) = i_n\left(h(T_G/n)(g)\right).\]
\item If $\overline{\sigma}, \sigma$ denote the componentwise ReLu functions in $L$ and $\RR[V]$ respectively then the equality $\overline{\sigma}\circ i_n = i_n\circ\sigma$ holds. 

\end{enumerate}

\end{lemma}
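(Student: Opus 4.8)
The plan is to establish the three parts in order, since parts (2) and (3) follow readily once the key identity of part (1) is available. I would begin with the base case of part (1), namely $T_{W_G}(f) = i_n \circ \frac{T_G}{n} \circ p_n(f)$, by evaluating both sides at a point $u$ lying in a fixed interval $I_{i_0}^{(n)}$. Expanding the left-hand side with the definition of the induced graphon $W_G$ and using that $1_{I_i^{(n)}}(u) = \delta_{i,i_0}$ for $u \in I_{i_0}^{(n)}$, the integral collapses to $\sum_j S_{i_0 j}\int_{I_j^{(n)}}f\,dv$, which equals $\tfrac{1}{n}\sum_j S_{i_0 j}(p_n f)(v_j)$ by the definition of $p_n$. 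On the right-hand side, $i_n$ produces a function that is constant on $I_{i_0}^{(n)}$ with value $\tfrac{1}{n}\sum_j S_{i_0 j}(p_n f)(v_j)$, so the two sides agree pointwise.

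To upgrade this to an arbitrary polynomial $h$ with zero constant term, I would invoke the identity $p_n \circ i_n = \mathrm{id}_{\RR[V^{(n)}]}$ recorded in Section~\ref{sec.graphons} to telescope powers. Composing the base case with itself and cancelling the inner $p_n \circ i_n = \mathrm{id}$ yields, by induction on $m$, the relation $T_{W_G}^m = i_n \circ (T_G/n)^m \circ p_n$ for every $m \geq 1$; taking the linear combination of these over the monomials of $h$ gives the general claim. The one genuinely delicate point is that this argument forces the constant term of $h$ to vanish: since $i_n \circ p_n$ is the averaging projection rather than the identity on $L$, an $m=0$ term would contribute $h_0\,\mathrm{id}$ on the left but $h_0\, i_n\circ p_n$ on the right, and these disagree. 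This is exactly where the zero-constant-term hypothesis is used, and I would flag it as the main subtlety of the proof (everything else being routine).

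Part (2) then follows immediately by substituting $f = i_n(g)$ into part (1): one gets $h(T_W)(i_n(g)) = i_n \circ h(T_G/n) \circ (p_n \circ i_n)(g) = i_n\bigl(h(T_G/n)(g)\bigr)$, again using $p_n \circ i_n = \mathrm{id}$. For part (3), I would evaluate both sides at a point $u \in I_{i_0}^{(n)}$: since $i_n(g)$ is constant equal to $g(v_{i_0})$ there, both $\overline{\sigma}(i_n(g))(u)$ and $i_n(\sigma(g))(u)$ equal $\max\bigl(0, g(v_{i_0})\bigr)$, because the pointwise ReLu commutes with piecewise-constant interpolation. No part presents a real obstacle; the only thing to watch is the constant-term hypothesis in part (1).
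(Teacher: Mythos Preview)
Your proposal is correct and follows essentially the same approach as the paper: a direct pointwise computation for the base case of part~(1), induction on the monomial degree using $p_n\circ i_n=\mathrm{id}_{\RR[V^{(n)}]}$ to handle general $h$, substitution of $f=i_n(g)$ for part~(2), and a direct pointwise check for part~(3). Your explicit remark on why the zero-constant-term hypothesis is needed is a nice addition that the paper leaves implicit.
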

\begin{proof} $(1)$ Recall that for $f\in L$ we have $T_W(f)(x)=\int_0^1W_G(x,y)f(y)dy$ which equals
\[=\int_0^1\sum_{i=1}^n\sum_{j=1}^n S_{ij} 1_{I_i^{(n)}}(x)1_{I_j^{(n)}}(y)f(y)dy =\]
\[=\sum_{i=1}^n\sum_{j=1}^n S_{ij}1_{I_i^{(n)}}(x)\int_{I_j^{(n)}}f(y)dy=\]
\[=\sum_{i=1}^n\left(\sum_{j=1}^n S_{ij}\mu(I_j^{(n)})\left(\int_{I_j^{(n)}}f(y)dy/\mu(I_j^{(n)})\right)\right)1_{I_i^{(n)}}(x)=\]
\[=\left(i_n\circ \frac{T_G}{n}\circ p_n(f)\right)(x)\]

where the last equality holds by definition of $p_n$ and $i_n$ and because $\mu(I_j^{(n)})=1/n$ for all $j$. For the second claim note that both sides are linear operators it suffices to prove the claim when $h(x)$ is a monomial of degree $k\geq 1$. This follows immediately by induction using the fact that $p_n\circ i_n=id_{\RR[V^{(n)}]}$ for every $n$. 
$(2)$ Apply the identity proven in part $(1)$  to the function $i_n(g)$ and use the equality $p_n\circ i_n=id_{\RR[V^{(n)}]}$.
$(3)$ If $g\in \RR[V]$ then
\[\overline{\sigma}(i_n(g))= \overline{\sigma}\left(\sum_{j=1}^n g(v_j)1_{I_j^{(n)}}(x)\right) =\sum_{j=1}^n\max(g(v_j),0) 1_{I_j^{(n)}}(x) = i_n(\sigma(g)).\]
\end{proof}

\begin{remark} If the points of the set $V$ are  not equally spaced in $[0,1]$ then the identity in part $(1)$ above \emph{does not hold}. This is a common misconception appearing in several articles in the literature.
\end{remark}

\begin{proof}[Proof of Theorem~\ref{thm: comparison}]  The first claim is Lemma~\ref{lem: comparison_prev} part $(1)$. For the second claim apply Lemma~\ref{lem: comparison_prev} inductively on layers.
\end{proof}

\subsection{Proof of the sampling Theorem}

\begin{proof}[Proof of Theorem~\ref{thm: sampling}] $(1)$ By compactness of the square $[0,1]^2$ given $\epsilon>0$ there exists $\delta>0$ such that for all $n$ sufficiently large, every rectangle $I_i\times I_j$ is entirely contained in balls of radius $\delta$ with the property that $|W(x_i,x_j)-W(a_i,a_j)|<\epsilon$  whenever $(x_i,x_j)$ and $(a_i,a_j)$ are in $I_i\times I_j$. In particular, at every point of each square the function deviates at most $\epsilon$ from its mean on this square proving that $\|W-\hat{H}^{(n)}\|_{L^1}\leq \epsilon$. From the results summarized in Section~\ref{Sec: graphonnorms} we conclude that $\|T_W-T_{\hat{H}^{(n)}} \|_{\rm HS} \rightarrow 0$ in the operator norm as claimed. $(2)$ For a positive integer $n$ let $B^{(n)}$ be the discretization of the graphon $W$ from the values at $V^{(n)}\times V^{(n)}$ defined by
\[B^{(n)}(x,y)=\sum_{i=1}^n\sum_{j=1}^n W(v_i^{(n)},v_j^{(n)})1_{I_i}(x)1_{I_j}(y)\]
Via the triangle inequality we estimate $\|W-\hat{G}^{(n)}\|_{\diamond}$ from above as the sum of $\|W-B^{(n)}\|_{\Diamond}$ and $\|B^{(n)}-\hat{G}^{(n)}\|_{\Diamond}$. The first term satisfies the inequality $\|W-B^{(n)}\|_{\Diamond}\leq \|W-B^{(n)}\|_{L^1}$ and thus goes to zero by continuity of $W$ by the argument from part $(1)$. For the second term $\|B^{(n)}-\hat{G}^{(n)}\|_{\Diamond}$ note that both graphons are constant in the squares $I_j\times I_k$ and therefore
\[\|B^{(n)}-\hat{G}^{(n)}\|_{\Diamond} = \max_{a,b\in \{0,1\}^n}\left|\sum_{i=1}^n\sum_{j=1}^n a_ib_j\frac{W(v_i^{(n)},v_j^{(n)})-S_{ij}^{(n)}}{n^2}\right|\]
where the $S_{ij}^{(n)}$ are independent Bernoulli random variables with success probability $W(v_i^{(n)},v_j^{(n)})$. Given $\epsilon>0$, let $A_n$ be the event that $\max_{a,b\in \{0,1\}^n}\left|\sum_{i=1}^n\sum_{j=1}^n a_ib_j\frac{W(v_i^{(n)},v_j^{(n)})-S_{ij}^{(n)}}{n^2}\right|\geq \epsilon$ where $1/n^2=\mu(I_i\times I_j)$ for every $i,j$. We will show that the series $\sum_{n} \mathbb{P}(A_n)<\infty$ concluding by the Borel-Cantelli Lemma that $\|B^{(n)}-\hat{G}^{(n)}\|_{\Diamond}\leq \epsilon$ for all but finitely many integers $n$. Since $\epsilon>0$ was arbitrary this proves that  $\|B^{(n)}-\hat{G}^{(n)}\|_{\Diamond}\rightarrow 0$ almost surely as claimed.
To verify the summability we will use a simple concentration inequality. The probability $\mathbb{P}(A_n)$ equals 
\[\mathbb{P}\left\{\bigcup_{a,b\in \{0,1\}^n}\left|\sum_{i=1}^n\sum_{j=1}^n a_ib_j\frac{W(v_i^{(n)},v_j^{(n)})-S_{ij}^{(n)}}{n^2}\right|\geq \epsilon\right\}\]
which is bounded above by
\[\leq \sum_{a,b\in \{0,1\}^n}\mathbb{P}\left\{\left|\sum_{i=1}^n\sum_{j=1}^n a_ib_j\frac{W(v_i^{(n)},v_j^{(n)})-S_{ij}^{(n)}}{n^2}\right|\geq \epsilon\right\}\]
Since each of the summands is the sum of $\leq n^2$ independent Bernoulli random variables shifted by their mean and divided by $n^2$, Bernstein's inequality implies that the sum is bounded above by
\[2^{n+1}e^{\left(-\frac{n^2\epsilon^2}{2(1+\epsilon/3)}\right)}= e^{-\frac{n^2\epsilon^2}{2(1+\epsilon/3)}+(n+1)\log(2)}\]
this quantity is summable by the ratio test proving the claim. From the results summarized in Section~\ref{Sec: graphonnorms} we know that $\|\bullet\|_{\Diamond}$ is topologically equivalent to $\|\bullet\|_{\rm op}$ and we conclude that $\|T_W-T_{\hat{G}^{(n)}} \|_{\rm op} \rightarrow 0$ proving the theorem.
\end{proof}
\begin{remark} It is necessary to add some assumption on $W$ for the conclusions of the Lemma above to hold. For instance if $W$ is just in $L^2([0,1]\times [0,1])$ then it can be modified in the countable, and thus Lebesgue measure zero, set $\bigcup_n \left(V^{(n)}\times V^{(n)}\right)$ without altering the operator $T_W$ making an approximation scheme as suggested above impossible.
\end{remark}

\section{Experimental details and additional experiments}
\label{app.experiments}
The code for experiments on stability for graph tuples and experiments on transferability for sparse graph tuples are available here:
\url{https://github.com/Kkylie/GtNN_weighted_circulant_graphs.git}.
And the code for experiments on real-world data from a movie recommendation system is available here:
\url{https://github.com/mauricio-velasco/operatorNetworks.git}.

\subsection{Details of experiments on stability for graph tuples}
\label{app.experiment details stability}

For this experiments we consider a weighted circulant graph of size $n$ with shift matrix $S^{(l)}$ as 
\begin{align*}
S_{ij}^{(l)} =
\begin{cases}
    p, &\quad \text{if } i=j \\
    (1-p)/2, &\quad \text{if } |i-j| \mod n = l \\
    0, &\quad \text{otherwise.}
\end{cases}
\end{align*}
We generate our data pair $(x_i, y_i) \in \calF \times \calF$ with $\calF = \RR[V]$ as 
\[
y_i = [0.76 S^{(l_2)}S^{(l_1)} + 0.33 S^{(l_1)} S^{(l_2)} + 0.3 (S^{(l_1)})^3] x_i + \epsilon_i,
\]
where each value of $x_i$ is uniformly distributed between $[0,1]$, $\epsilon_i \in \calF$ is normal distributed with standard deviation $\sigma = 0.1$, and we pick $n = 293$, $p = 0.05$, $l_1 = 1$, and $l_2 = 30$.
Noted that both input and output have only one feature, i.e., $\alpha_0 = \alpha_N = 1$.
We train our model with $800$ training data $I$ and test it on $200$ testing data $I_{\text{test}}$.
We use MSE loss, and use ADAM with learning rate $0.01$, $\beta_1 = 0.9$ and $\beta_2 = 0.999$ to train our models. Running these experiments took a few hours on a regular laptop (just CPU).

Denote $T_1$ and $T_2$ as the shift operator corresponding to $S^{(l_1)}$ and $S^{(l_2)}$ respectively. Recall from the main text (Section \ref{sec:experiments}) that we consider four different models: (i) one layer unconstrained GtNN (i.e., $\lambda = 0$ in \eqref{eq: stable penalty problem}), (ii) one layer stable GtNN (with $\lambda = 10$), (iii) two layers GtNN with number of hidden feature $\alpha_1 = 2$, and (iv) two layers unconstrained GtNN (with $\lambda = 10$ and $\alpha_1 = 2$).
For all four models, we set the non-commutative polynomial $h(T_1, T_2)$ to be any polynomial of degree at most $d = 3$.
Thus, we have $15$ trainable coefficients for both one layer models and $60$ for both two layer models.
For the stable GtNN model, we constrain the expansion constants to be at most half of the corresponding expansion constants obtained after training the unconstrained model.
Specifically, let $\vec{C}(\vec{H}^{(i)})$ and $\vec{C}_j(\vec{H}^{(i)})$ denotes the resulting expansion constants vectors for model (i).
Then, we set the constraints for model (ii) to be $\vec{C} = \vec{C}(\vec{H}^{(i)}) / 2$ and $\vec{C}_j = \vec{C}_j(\vec{H}^{(i)}) / 2$ for $j=1,2$.
Similarly, we set the constraints for model (iv) to be $\vec{C} = \vec{C}(\vec{H}^{(iii)}) / 2$ and $\vec{C}_j = \vec{C}_j(\vec{H}^{(iii)}) / 2$ for $j=1,2$.

Figure~\ref{fig: stability metrics as epochs} shows the empirical stability metrics and the corresponding upper bounds as a function of the number of epochs for the one layer models (i) and (ii).

As we see in the proof of Theorem~\ref{thm: perturb} (equation~\eqref{eq: proof in thm1 1} and equation~\eqref{eq: proof in thm1 2}), the equation $\|\Psi(H,\vec{T})\|_{\boxed{\rm op}} = \|h(\vec{T})\|_{\rm op}$ shows the output perturbation due to the perturbation from input signal, and $\|\Psi(H,\vec{W})-\Psi(H,\vec{Z})\|_{\boxed{\rm op}} = \|h(\vec{W})-h(\vec{Z})\|_{\rm op}$ shows the output perturbation due to the perturbation from the graph.
Meanwhile, $C(h)$ and $C_j(h)$ are the expansion constants that we constrained for the stable GtNN model.
We note that the upper bounds exhibit the same qualitative behavior as the empirical stability metrics, especially for the stable GtNN model where all the curves drop due to the parameter reaching the boundary of the constraint sets.
This suggests that our stability bound is tight, and controlling the expansion constants increase the model stability.
In addition, adding stability constraints has no harm on the prediction performance, since the testing R squared value for GtNN is $0.6867$, while for stable GtNN is $0.6864$.

To demonstrate the improvement on stability by our algorithms, we test all four models on various perturbed graphs (while fixing input signal). 
As shown in Figure~\ref{fig: stability metrics as epochs}(right) the stable GtNNs increases the stability under graph perturbations, especially in the 2-layer model.

\subsection{Experiments on transferability for sparse graph tuples}
\label{app.Experiments transferability}
We test the transferability behavior on the weighted circulant graph model from Appendix~\ref{app.experiment details stability}. We are motivated by the practical setting where we aim to train a model on a small graphs and evaluate it on larger graphs. We consider a piecewise constant graphon tuple $(W_1,W_2)$ induced from the $n=300$ circulant graph tuple, and similarly we generate a piecewise constant functions by the interpolation operator $i_n$ for each data point.

Next, we use this graphon and piecewise constant function as a generative model to generate deterministic weighted graphs $(G_1,G_2)$ of size $m \leq n$ as training graphs (normalized by $m$) and to generate training data by the sampling operator $p_m$. Since $||T_{W_j}-T_{\hat{G}_j}|| _{\rm op} \to 0$ as $m\to n$, according to Theorem~\ref{thm: transf} the transferability error goes to $0$ too. To demonstrate this, we train five different models, trained with graphs tuples of fixed size $m=100,150,200,250,300$ (respectively) and compare the performance of the testing data with $n=300$.

\begin{figure}[h]
    \centering
    \includegraphics[width=0.30\textwidth,trim={90 235 90 235},clip]{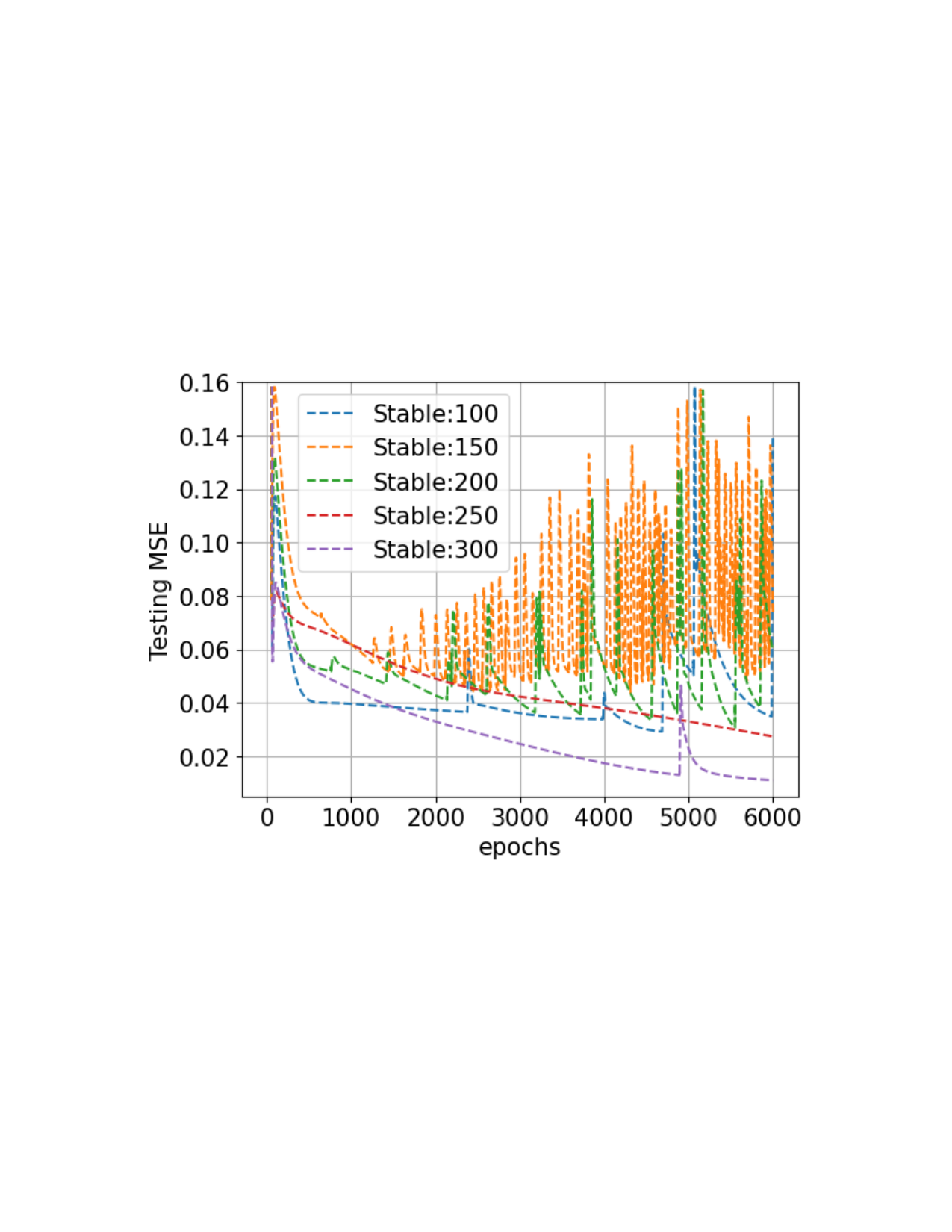}
    \includegraphics[width=0.30\textwidth,trim={90 235 90 235},clip]{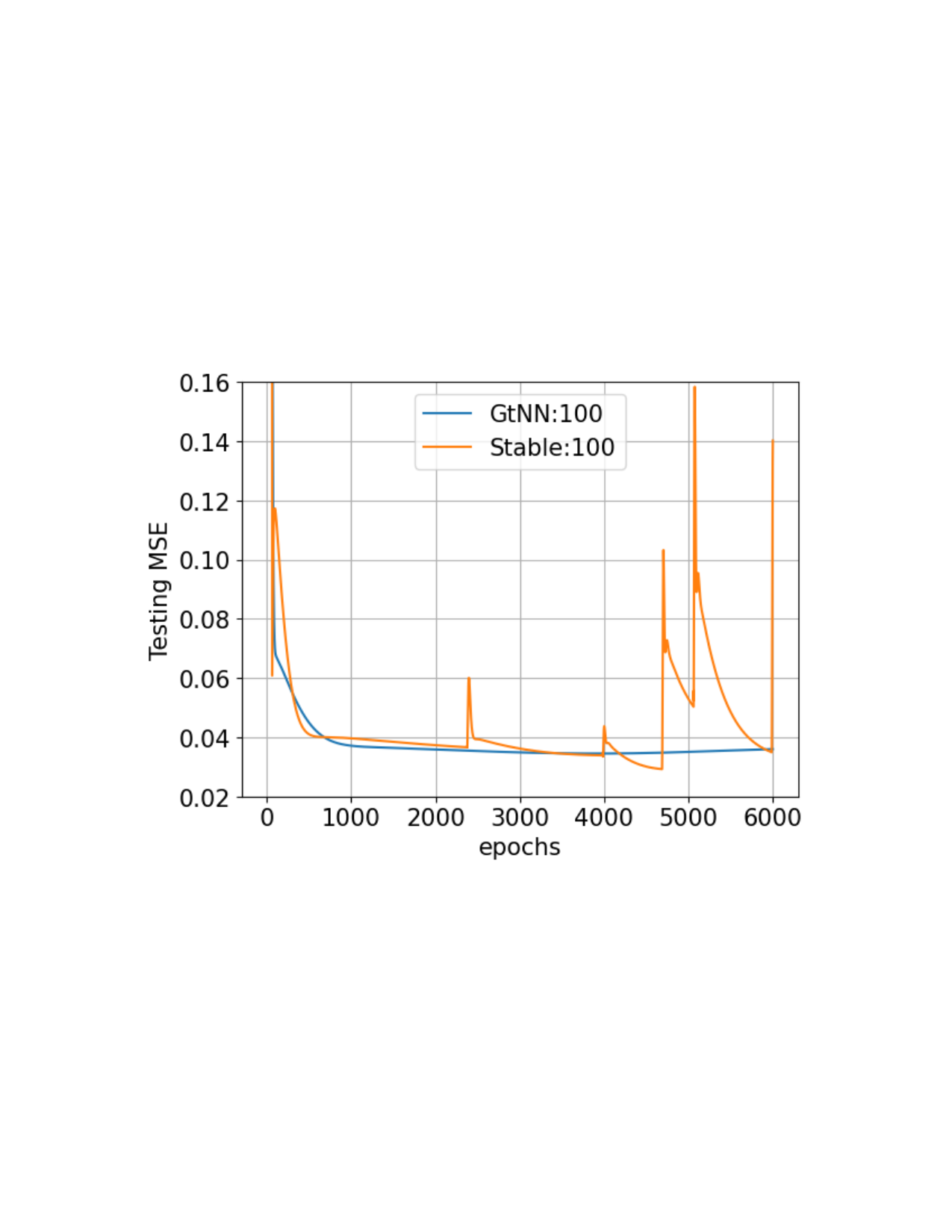}
    \caption{
    Mean squared error (MSE) on the test set (with testing graph of size $n=300$) as a function of the number of training epochs for \textbf{(Left)} (1-layer) GtNN and \textbf{(Middle)} (1-layer) stable GtNN. In both plots we depict the performance of five different models, trained with graphs of sizes $m = 100, 150, 200, 250, 300$ respectively. 
    \textbf{(Right)} Comparison of testing MSE between (1-layer) GtNN (blue) and (1-layer) stable GtNN (orange) for training graphs of size $m=100$ as a function of the number of epochs.
    }
    \label{fig: transferbility}
\end{figure}

Figure~\ref{fig: transferbility} shows that the best testing MSE decreases as the training size $m$ approaches $n$ for the GtNN, which shows transferability holds for sparse graph tuples. For the stable GtNN, the general trend of the testing MSE curves also indicates transferability. In addition, the performance comparison between GtNN and stable GtNN for $m=100$ shows that our stability constraint improves the transferability by reducing the best testing MSE. However, this improvement only appears for the $m=100$ case. All the other cases have worse performance for the stable GtNN. We conjecture this is because the stability constraint makes the training process take a longer time to converge, and whenever it hits the constraint boundaries the MSE jumps, which also makes it harder to converge to a local minimum. It will be interesting to see if other learning algorithms or penalty functions for the stability constraints help improve the performance.

\subsection{Experiments on real-world data from a movie recommendation system}

\label{app.experiment movie recommendation}
Finally, we present results on the behavior of graph-tuple neural filters on real data as a tool for building a movie recommendation system. We use the publicly available MovieLens 100k database, a collection of movie ratings given by a set of 1000 users~\cite{MovieLens} to 1700 movies. Our objective is to interpolate ratings among users: starting from the ratings given by a small set of users to a certain movie, we wish to predict the rating given to this movie by all the remaining users. Following~\cite{RibeiroCollaborative} we center the data (by removing the mean rating of each user from all its ratings) and try to learn a \emph{deviation from the mean rating function}. More precisely, letting $U$ be the set of users, we wish to learn the map $\phi:\mathbb R[U]\rightarrow\mathbb R[U]$ which, given a partial deviation from the mean ratings function $f:U\rightarrow {1,2,\dots,5}$ (with missing data marked as zero) produces the full rating function $\hat{f}=\phi(f)$ where $f(u)$ contains the deviation of the mean ratings for user $u$.

The classical Collaborative filtering approach to this problem consists of computing the empirical correlation matrix $B$ among users via their rating vectors. A more recent approach~\cite{RibeiroCollaborative} defines a shift operator $S$ on the set of users by sparsifying $B$. More precisely we connect two users whenever their pairwise correlation is among the $k$ highest for both and then approximate $\phi$ as a linear filter or more generally a GNN evaluated on $S$. Although typically superior to collaborative filtering, this approach has a fundamentally ambiguous step: How to select the integer $k$? To the best of our knowledge, there is no principled answer to this question so we propose considering several values simultaneously, defining a tuple of shift operators, and trying to learn $\phi$ via graph-tuple neural networks on $\mathbb R[U]$. More specifically we compute two shift operators $T_1,T_2$ by connecting each user to the $10$ and $15$ most correlated other users respectively, and compare the performance of the GtNN on the tuple $(T_1,T_2)$ (2ONN) with the best between the GNNs on each of the individual operators $T_1$ and $T_2$ (GNN). To make the comparison fair we select the degrees of the allowed polynomials so that all models have the same number of trainable parameters (seven).

Figure~\ref{Fig: out_of_sample_error} (left) shows that an appropriately regularized Graph-tuple network significantly outperforms all other models at any point throughout the first $500$ iterations (the minimum occurs when the training error stops improving significantly). However, if the model is over-trained as in the right plot of Figure~\ref{Fig: out_of_sample_error} then it can suffer from a vanishing gradients limitation which may lead to a trained model worse than the best one obtained from standard graph filters. This example suggests that graph-tuple neural networks are of potential relevance to applications.

\end{document}